\documentclass{article} 
\usepackage{iclr2022_conference,times}


\usepackage{amsmath,amsfonts,bm}









\def\eqref#1{equation~\ref{#1}}









\def\1{\bm{1}}










\DeclareMathAlphabet{\mathsfit}{\encodingdefault}{\sfdefault}{m}{sl}
\SetMathAlphabet{\mathsfit}{bold}{\encodingdefault}{\sfdefault}{bx}{n}













\newcommand{\norm}[1]{\left\lVert#1\right\rVert}

\usepackage{hyperref}
\usepackage{url}
\usepackage{booktabs}       
\usepackage{algorithm}
\usepackage{algorithmic}
\usepackage{amsfonts}       
\usepackage{amsmath}
\usepackage{amssymb}
\usepackage{nicefrac}       
\usepackage{microtype}      
\usepackage{graphicx}       
\usepackage{wrapfig, blindtext}
\usepackage{caption}        
\usepackage{subcaption}     
\usepackage[space]{grffile} 
\usepackage{float}
\usepackage{multirow}
\usepackage{comment}
\usepackage{makecell}
\usepackage{enumitem}

\usepackage{wrapfig}
\usepackage{lipsum}

\usepackage{amsthm}
\newtheorem{thm}{Theorem}
\newtheorem{lma}{Lemma}

\newtheorem{definition}{Definition}

\title{AutoDrop: Training Deep Learning Models with Automatic Learning Rate Drop}


\author{Yunfei Teng \\
\texttt{yt1208@nyu.edu} \\
\And
Jing Wang \\
\texttt{jw5665@nyu.edu} 
\And
Anna Choromanska\\
\texttt{ac5455@nyu.edu} 
}

%

\begin{document}

\maketitle

\begin{abstract}
Modern deep learning (DL) architectures are trained using variants of the SGD algorithm that is run with a \textit{manually} defined learning rate schedule, i.e., the learning rate is dropped  at the pre-defined epochs, typically when the training loss is expected to saturate. In this paper we develop an algorithm that realizes the learning rate drop \textit{automatically}. The proposed method, that we refer to as AutoDrop, is motivated by the observation that the angular velocity of the model parameters, i.e., the velocity of the changes of the convergence direction, for a fixed learning rate initially increases rapidly and then progresses towards soft saturation. At saturation the optimizer slows down thus the angular velocity saturation is a good indicator for dropping the learning rate. After the drop, the angular velocity ``resets'' and follows the previously described pattern - it increases again until saturation. We show that our method improves over SOTA training approaches: it accelerates the training of DL models and leads to a better generalization. We also show that our method does not require any extra hyperparameter tuning. AutoDrop is furthermore extremely simple to implement and computationally cheap. Finally, we develop a theoretical framework for analyzing our algorithm and provide convergence guarantees. 
\end{abstract}

\section{Introduction}
\label{sec:Intro}

As data sets grow in size and complexity, it is becoming more difficult to pull useful features from them using hand-crafted feature extractors. For this reason, DL frameworks~\citep{Goodfellow-et-al-2016} are now widely popular. DL frameworks process input data using multi-layer networks and automatically find high-quality representation of complex data useful for a particular learning task. Today DL approaches are generally recognized as superior to all alternatives for image~\citep{NIPS2012_4824,He2016DeepRL}, speech~\citep{DBLP:conf/icassp/Abdel-HamidMJP12}, and video~\citep{KarpathyCVPR14} recognition, image segmentation~\citep{chen2016deeplab}, and natural language processing~\citep{DBLP:conf/emnlp/WestonCA14}. Furthermore, DL is the leading artificial intelligence technology in major tech companies such as Facebook, Google, Microsoft, and IBM, as well as in countless start-ups, where it is used for a plethora of learning problems including content filtering, photo collection management, topic classification, search/ad ranking, video search and indexing, and copyrighted material detection.

Setting the values and schedules of the hyperparameters for training DL models is computationally expensive and time consuming, e.g., a deep model with around ten billion parameters requires roughly $500$ GPUs to be trained in around two weeks~\citep{MegatronLM}. Among all hyperparameters used when training DL models, the learning rate schedule is one of the most important~\citep{jin2021autolrs}. For most SOTA DL architectures, the learning rate is dropped several times during training at epochs chosen by the user. With growing sizes of modern architectures however, performing any manual tuning of the hyperparameters will eventually become prohibitive. More efficient techniques that allow automatic and online setting of hyperparameters translate to substantial savings of resources, time, and money (today the cost of training a single state-of-the-art DL model reaches up to hundreds of thousands of dollars~\citep{Money}). 

This paper addresses a challenge of developing an automatic method for adjusting the learning rate that works in an online fashion during network training and does not introduce any extra hyper-parameters to tune. The basis for our approach is rooted in the observation that the angular velocity of the model parameters, defined below, is an excellent indicator of the dynamics of the convergence of an optimizer and can be easily used to guide the learning rate drop during network training. The resulting algorithm that we obtain is extremely simple, can be used on the top of any DL optimizer (SGD~\citep{bottou-98x}, momentum SGD~\citep{Polyak1964}, ADAM~\citep{kingma2015}, etc.), and enjoys an elegant theoretical framework. We empirically demonstrate that our method accelerates the training of DL models and leads to better generalization compared to SOTA techniques.

\begin{definition}
Define the angular velocity of model parameters as:
\vspace{-0.09in}
\begin{equation}
\omega_i = \frac{\angle(s_i,s_{i-1})}{1 \:\:\:\text{epoch}}, \:\:\:\text{where}\:\:\:s_i = x_{i+1} - x_i
\label{eq:defAV}
\end{equation}
\vspace{-0.23in}

and $x_i$ is the parameter vector in the end of the $i^{\text{th}}$ epoch. The operator $\angle(\cdot, \cdot)$ calculates the angle between two vectors and is defined as:
\vspace{-0.1in}
\begin{equation}
\angle(s_i,s_{i-1}) = \frac{180^{\circ}}{\pi} \cdot \arccos\left(\frac{s_i^T s_{i-1}}{||s_i|| ||s_{i-1}|| + \epsilon}\right),
\label{eq:DefAG}
\end{equation}
\vspace{-0.2in}

where $\epsilon$ is a small positive number preventing the division by zero \footnote{$\epsilon$ is omitted in the theoretical derivations.}.
\label{def:defAV}
\end{definition}

This paper is organized as follows: Section~\ref{sec:RW} discusses the related work, Section~\ref{sec:ME} builds an intuition for understanding our algorithm based on simple examples, Section~\ref{sec:Alg} shows our algorithm, Section~\ref{sec:Theory} captures the theoretical convergence guarantees, Section~\ref{sec:ER} presents experimental results, and Section~\ref{sec:Con} concludes the paper. All proofs and experimental details are deferred to the Supplement.

\section{Related Work}
\label{sec:RW}

In this section, we summarize different types of learning rate adaptation methods and divide them into four major categories. \textit{Scheduling-based methods} rely on a carefully designed learning rate schedules that are tailored to the non-convex nature of the deep learning optimization. More specifically, it was proposed in~\citep{smith2017cyclical} to use cyclical learning rate pattern to train DL models and apply a triangular learning rate policy in each cycle (i.e., first increase and then decrease the learning rate linearly in the cycle) to potentially allow more rapid traversal of saddle point plateaus. This idea was further extended to the super-convergence policy~\citep{smith2018super} where there is only one triangular cycle for the whole training process. This concept was also applied to other hyperparameters, e.g.:, momentum coefficient ~\citep{smith2018disciplined}. Cyclical learning rates were also used in~\citep{loshchilov2016sgdr}, where the authors combine them with restart techniques when training deep neural networks. The authors decrease the learning rate from a maximum value to a minimum value using a cosine annealing scheme and then periodically restart the process. All these methods define the learning rate policy manually, thus they constitute deterministic scheduling methods. As opposed to these techniques, ~\citep{jin2021autolrs} proposes an automatic learning rate scheduling method. The authors use Gaussian process as a surrogate model to establish the connection between the learning rate and the expected validation loss. The method updates a posterior distribution of the validation loss repeatedly and search for the best learning rate with respect to the posterior on the fly. This method requires a careful design of an acquisition function and a forecasting model in order to obtain an accurate prediction of the posterior of the validation loss.

Another group of techniques are \textit{hypergradient-based methods}~\citep{donini2020marthe, yang2019228, gunes2018online, luca2017hyper} that optimize both the model parameters and the learning rate simultaneously. The authors of these methods typically introduce a hypergradient that is defined as a gradient of the validation error with respect to the learning rate schedule. The learning rate is optimized online via gradient descent. This technique however is quite sensitive to the choice of the hyperparameters and is usually unable to reach state-of-the-art performance~\citep{jin2021autolrs}.

\textit{Hyperparameter optimization methods} aim to automatically find a good set of hyperparameters offline. They either build explicit regression models to describe the dependence of
target algorithm performance on hyperparameter settings~\citep{hutter2011}, or optimize hyperparameters by performing random search along with using greedy sequential methods based on the expected improvement criterion~\citep{bergstra2011}, or use bandit-based approach for hyperparameter selection~\citep{li2018hyperband}. These technique can be combined with Bayesian optimization~\citep{falkner2018, zela-automl18}. Recently, several parallel methods were proposed for hyperparameter tuning~\citep{jaderberg2017population, li2020population, holder2020prov, li2020system} as well. The hyperparameter optimization methods are computationally expensive in practice. 

Finally, popular \textit{adaptive learning rate optimizers} adjust the learning rate for each parameter individually based on gradient information from past iterations. AdaGrad~\citep{duchi2011} proposes to update each parameter using different learning rate which is proportional to the inverse of the past accumulated squared gradients of the parameter. Thus the parameters associated with larger accumulated squared gradients have smaller step size. This method is enabling the model to learn infrequently occurring features, as these features might be highly informative and discriminative. The major weakness of AdaGrad is that the learning rates continually decrease during the training and eventually become too small for the model to learn. Later on, RMSprop~\citep{tieleman2012lecture} and Adadelta~\citep{zeiler2012adadelta} were proposed to resolve the issue of diminishing learning rate in AdaGrad. Instead of directly summing up the past squared gradients, both methods maintain an exponential average of the squared gradients which is used to scale the learning rate of each parameter. The exponential average of the squared gradients could be considered as an approximation to the second moment of the gradients. One step further, ADAM~\citep{kingma2015} estimates both first and second moments of the gradients and use them together to update the parameters.

\section{Motivating Example}
\label{sec:ME}

In this section we analyze the properties of the angular velocity for a noisy quadratic model. While simple, this model is used as a proxy for
analyzing neural network optimization~\citep{schaul2013pesky,pmlr-v37-martens15,zhang2019lookahead}.

\begin{definition}[Noisy Quadratic Model]
We use the same model as in~\citep{zhang2019lookahead}. The model is represented by the following loss function
\vspace{-0.05in}
\begin{align}\label{eq:loss}
    L(x)=\frac{1}{2}(x-c)^TA(x-c),
\end{align}
\vspace{-0.2in}

where $c\sim N(x^*,\Sigma)$ and both $A$ and $\Sigma$ are diagonal. Without loss of generality, we assume $x^*=0$. 
\label{def:NQM}
\end{definition}

The update formula for the gradient descent at the step $t+1$  is given as
\vspace{-0.05in}
\begin{align}
    x_{t+1}=x_t-\alpha\nabla L(x_t)=x_t-\alpha A(x_t-c_t),\quad c_t\sim N(0,\Sigma),
\end{align}
\vspace{-0.25in}

where $\alpha$ is the learning rate.

We optimize noisy quadratic model with $x\in\mathbb{R}^{200}$ and $A=diag(\frac{1}{10},\frac{2}{10},...,\frac{200}{10})$ using Gradient Descent (GD), where in each experiment $\alpha = [0.06, 0.03, 0.01, 0,001]$.

\begin{figure}[htp!]
\vspace{-0.15in}
\centering
\includegraphics[trim={0 0 0 1cm},clip,width=.45\textwidth]{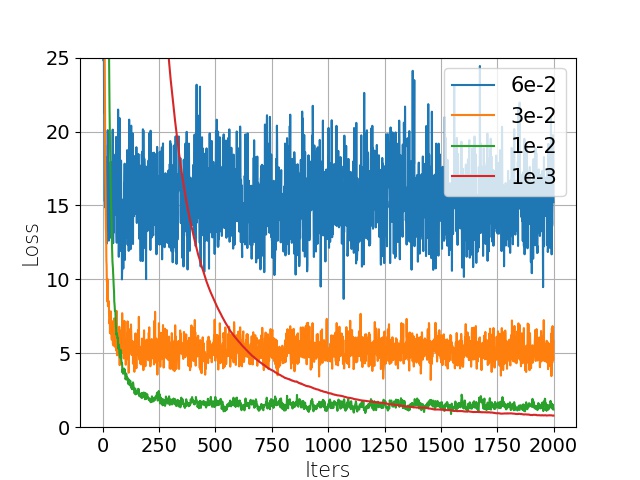}
\includegraphics[trim={0 0 0 1cm},clip,width=.45\textwidth]{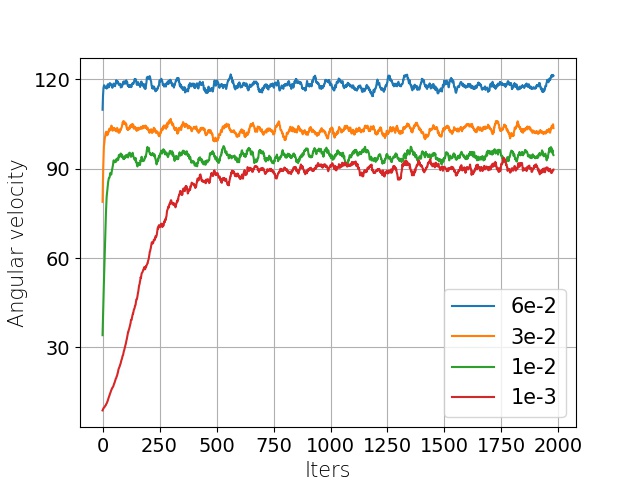}
\vspace{-0.14in}
\caption{The behavior of the loss and angular velocity for noisy quadratic model. An optimizer is run with different settings of the learning rate $\alpha=[0.06, 0.03, 0.01, 0,001]$. Angular velocity is averaged over $20$ iterations.}
\label{fig:NQM}
\vspace{-0.1in}
\end{figure}

The experiments captured in Figure~\ref{fig:NQM} reveal the following properties:
\begin{itemize}[noitemsep,topsep=0pt,parsep=0pt,partopsep=0pt, leftmargin=0.3in]
    \item [(P1)] \textbf{Angular velocity saturation:} the angular velocity curves\footnote{For the noisy quadratic model, the angular velocity (given in Definition~\ref{def:defAV}) is computed with respect to one iteration, rather than an epoch, as for this model there is no notion of the epoch.} have the tendency to saturate as the training proceeds, and furthermore when the angular velocity enters the saturation phase, the optimizer slows down its convergence,
    \item [(P2)] \textbf{Angular velocity saturation levels:} i) if the learning rate is large enough such that the algorithm cannot converge to the optimum, the angular velocity saturates at a level larger than $90$ degrees and smaller than $120$ degrees; ii) as the learning rate decreases, and the algorithm systematically converges closer to the optimum, the angular velocity saturates at progressively lower levels; iii) smaller learning rate leads to a slower saturation of the angular velocity; iv) when the learning rate is low enough such that the algorithm can converge to the optimum, the angular velocity saturates at $90$ degrees.  
\end{itemize}

These empirical properties can be theoretically justified as shown in the next theorem. 

\begin{thm}\label{thm_1}
Let the $i$-th diagonal terms of matrices $A$ and $\Sigma$ in the noisy quadratic model be given as $a_i$ and $\sigma_i$, respectively. Then, the expected inner product $<s_{t},s_{t+1}>$ converges to
\vspace{-0.05in}
\begin{align}
    I^*=\lim_{t\to\infty}\mathbb{E}[<s_{t},s_{t+1}>]=-\alpha^3\sum_{i=1}^n\frac{a_i^3\sigma_i^2}{2-\alpha a_i}.
\end{align}
\vspace{-0.2in}

Moreover, the cosine value of an angle between two consecutive steps $\cos\angle (s_t,s_{t+1})$ satisfies
\vspace{-0.05in}
\begin{align}
      C^*\!\!=\!\!\lim_{t\to\infty}\mathbb{E}[cos(\angle(s_t,s_{t+1}))]\!\approx\!-\frac{\alpha}{2}\frac{\sum_{i=1}^n\frac{a_i^3\sigma_i^2}{2-\alpha a_i}}{\sum_{i=1}^n\frac{a_i^2\sigma_i^2}{2-\alpha a_i}}\!\geq\!-\frac{\alpha\max_i a_i}{2}
  \end{align}
  \vspace{-0.2in}

 $C^*\in [-\frac{1}{2}, 0]$ and thus $\angle (s_t,s_{t+1})$ is between $90$ to $120$ degrees.
\end{thm}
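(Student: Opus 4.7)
The plan is to exploit the diagonal structure of $A$ and $\Sigma$ so that the vector iteration decouples into independent scalar AR(1) processes, and both the inner product and squared norm become sums over coordinates. Coordinate-wise we have $x_{t+1,i} = (1-\alpha a_i) x_{t,i} + \alpha a_i c_{t,i}$ with $c_{t,i}\sim N(0,\sigma_i^2)$ i.i.d., and $s_{t,i} = -\alpha a_i(x_{t,i}-c_{t,i})$. The first step is to identify the stationary distribution of $x_{t,i}$: under the stability condition $0<\alpha a_i<2$, $x_{t,i}$ converges in distribution to a centered Gaussian with variance $v_i^\star = \alpha a_i \sigma_i^2/(2-\alpha a_i)$, obtained as the fixed point of the variance recursion.

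Next, I would compute the stationary expectation $\mathbb{E}[s_{t,i}\,s_{t+1,i}]$ by substituting the recursion to write $s_{t,i}s_{t+1,i} = \alpha^2 a_i^2 (x_{t,i}-c_{t,i})\bigl[(1-\alpha a_i)x_{t,i}+\alpha a_i c_{t,i}-c_{t+1,i}\bigr]$. Independence of $x_{t,i}$ from $c_{t,i}$ and $c_{t+1,i}$, together with independence of $c_{t,i}$ and $c_{t+1,i}$, kills every cross term in expectation and leaves $\mathbb{E}[s_{t,i}\,s_{t+1,i}] = \alpha^2 a_i^2\bigl[(1-\alpha a_i)\mathbb{E}[x_{t,i}^2] - \alpha a_i\sigma_i^2\bigr]$. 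Plugging in $v_i^\star$ and simplifying gives $-\alpha^3 a_i^3\sigma_i^2/(2-\alpha a_i)$; summing over $i$ delivers the formula for $I^\star$.

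For the cosine, I would compute $\mathbb{E}[\|s_t\|^2]$ analogously: coordinate-wise $\mathbb{E}[s_{t,i}^2] = \alpha^2 a_i^2(\mathbb{E}[x_{t,i}^2]+\sigma_i^2)\to 2\alpha^2 a_i^2\sigma_i^2/(2-\alpha a_i)$, using that $x_{t,i}$ and $c_{t,i}$ are independent centered Gaussians. I would then approximate $\mathbb{E}[\langle s_t,s_{t+1}\rangle/(\|s_t\|\|s_{t+1}\|)]$ by the ratio of expectations $\mathbb{E}[\langle s_t,s_{t+1}\rangle]/\mathbb{E}[\|s_t\|^2]$, exploiting that $\|s_t\|$ and $\|s_{t+1}\|$ share the same stationary law and that $\|s_t\|^2$, being a sum of many independent coordinate contributions, concentrates around its mean. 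This substitution yields the stated expression for $C^\star$. The bound $C^\star\geq -\alpha\max_i a_i/2$ then follows by factoring $\max_i a_i$ out of the numerator, and combined with the standing assumption $\alpha\max_i a_i\leq 1$ it gives $C^\star\geq -1/2$; the upper bound $C^\star\leq 0$ is immediate because $I^\star<0$ while the denominator is positive.

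The step I expect to be the main obstacle is justifying the passage from the expectation of a ratio to a ratio of expectations, which is exactly where the ``$\approx$'' in the cosine formula enters. A fully rigorous treatment would require either a concentration argument for $\|s_t\|^2$ that exploits independence across coordinates, or a first-order Taylor expansion of $1/(\|s_t\|\|s_{t+1}\|)$ around its mean with explicit control of the remainder in terms of $\Var(\|s_t\|^2)/\mathbb{E}[\|s_t\|^2]^2$. A secondary, smaller technicality is confirming that the joint law of $(x_t,x_{t+1})$ has actually reached stationarity independently of initialization before interchanging limit and expectation, which follows from the contraction $|1-\alpha a_i|<1$ but should be stated carefully.
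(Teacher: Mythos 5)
Your proposal is correct and follows essentially the same route as the paper's proof: both compute the stationary second moment $\lim_t\mathbb{E}[x_{t,i}^2]=\alpha a_i\sigma_i^2/(2-\alpha a_i)$ from the variance fixed point, evaluate $\mathbb{E}[\langle s_t,s_{t+1}\rangle]$ and $\mathbb{E}[\norm{s_t}^2]$ in that limit, replace the expectation of the cosine by the ratio of expectations, and derive the bound via $\alpha a_i<1$ (your coordinate-wise AR(1) phrasing is just the scalar form of the paper's diagonal matrix computation). Your explicit flagging of the ratio-of-expectations step as the unjustified approximation is apt, since the paper likewise introduces it only as a stated simplification without a concentration argument.
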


Theorem~\ref{thm_1} implies that as training proceeds, the angular velocity eventually saturates as stated in property P1.
Theorem~\ref{thm_1} furthermore shows that decreasing the learning rate causes the angle between $s_{t}$ and $s_{t+1}$ to converge to a smaller value. Also, from Theorem \ref{thm_1}, $I^*=\lim_{t\to\infty}\mathbb{E}[<s_{t},s_{t+1}>]=-\sum_{i=1}^n(\alpha a_i)^3\sigma_i^2\left[\frac{1}{2-\alpha a_i}\right]$. When $\alpha a_i(i=1,..,n)$ is small enough, $I^*$ can be treated as $0$ which implies that $s_t$ is orthogonal to $s_{t+1}$. In other words, the angle between $s_{t}$ and $s_{t+1}$ converges to $90$ degrees for small enough learning rate. Otherwise, for larger learning rates, this angle saturates above $90$ degrees. Furthermore, the limit of cosine angle $C^*$ is approximately larger than $-\frac{1}{2}$, thus the saturation level of angular velocity should be below $120$ degrees. This together supports property P2 (in particular this supports points i,ii, and iv; point iii remains an empirical observation).

We next empirically verified whether these observations carry over to non-convex DL setting on a simple experiment reported in Figure~\ref{fig:DeepME}. Clearly, property P1 holds, whereas property P2 is satisfied partially. In particular conclusion iii is broken as the angular velocity may not reach $90$ degrees. Also, in a DL setting one can observe that for lower learning rates the angular velocity curves become more noisy at saturation, which was not the case for a noisy quadratic model. 

\begin{figure}[htp!]
\vspace{-0.1in}
\centering
\includegraphics[width=.325\textwidth]{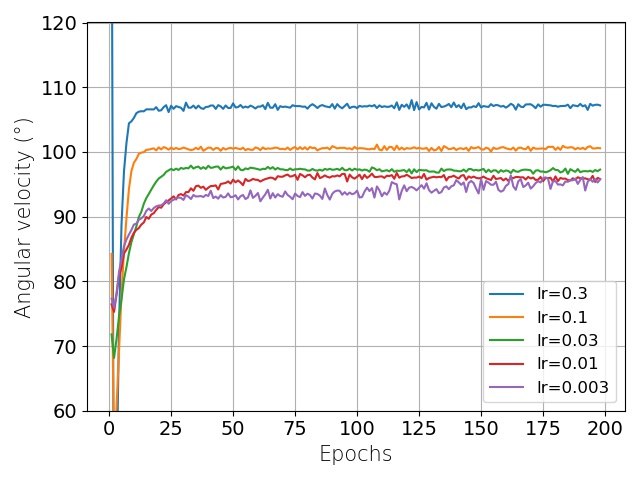}
\includegraphics[width=.325\textwidth]{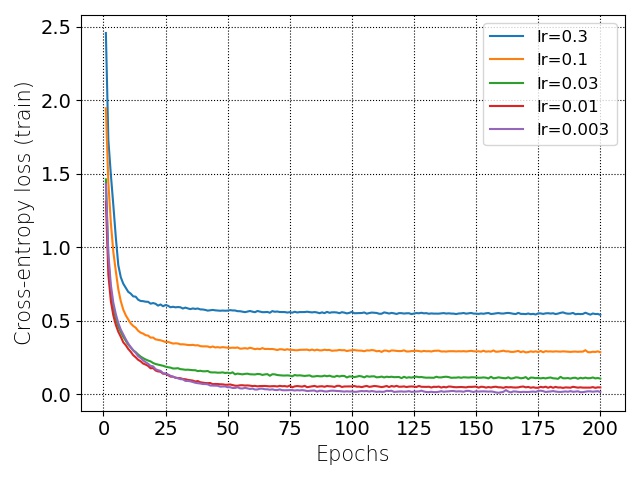}
\includegraphics[width=.325\textwidth]{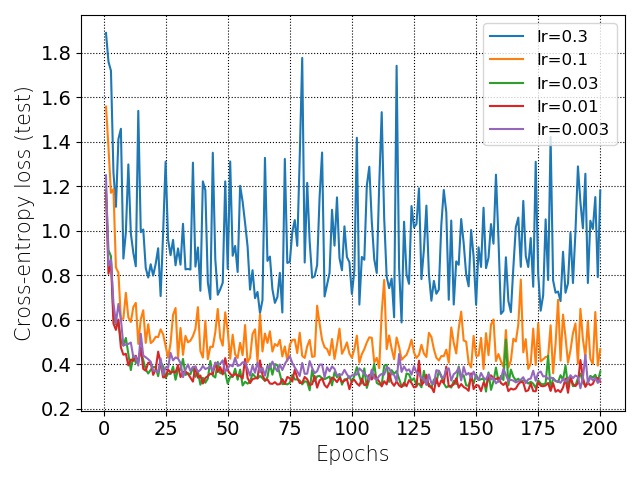}
\vspace{-0.14in}
\caption{The behavior of the loss and angular velocity for an exemplary DL problem (training ResNet-18 on CIFAR-10). An optimizer is run with different settings of the learning rate $\alpha=[0.3, 0.1, 0.03, 0.01, 0.003]$. Angular velocity is calculated over a single epoch.}
\label{fig:DeepME}
\vspace{-0.1in}
\end{figure}

Property P1 is a key observation underlying our algorithm. An important conclusion from this observation is that the saturation of the angular velocity can potentially guide the drop of the learning rate of the optimization algorithm. In other words, given the lower-bound on the learning rate, each time the angular velocity saturates, the learning algorithm should decrease the learning rate. Tracking the saturation of the angular velocity is more plausible than tracking the saturation of the loss function since, as can be clearly seen in Figure~\ref{fig:NQM}, angular velocity curves follow much harder saturation pattern. Also, the loss function does not necessary need to have a bounded range, as opposed to the angular velocity. We found that property P1 is sufficient to design an optimization algorithm for training DL models. The algorithm is described in Section~\ref{sec:Alg}. Property P2 is crucial for the theoretical analysis provided in Section~\ref{sec:Theory}.

Following the above intuition, we implement a simple algorithm for optimizing the noisy quadratic model. The algorithm drops the learning rate by a factor of $2$ when the angular velocity saturates (i.e.:, the change of the angular velocity averaged across $20$ iterations is smaller than $0.01$ degree between $2$ consecutive iterations). The initial learning rate was set to $0.06$ and the minimal one was set to $0.001$. Figure~\ref{fig:NQM2} captures the results. It shows that the algorithm that is using the angular velocity to guide the drop of the learning rate indeed converges to the optimum.

\begin{wrapfigure}{r}{0.7\textwidth} 
\vspace{-0.2in}
\centering
\includegraphics[trim={0.25cm 0 0.3cm 1cm},clip,width=0.35\textwidth]{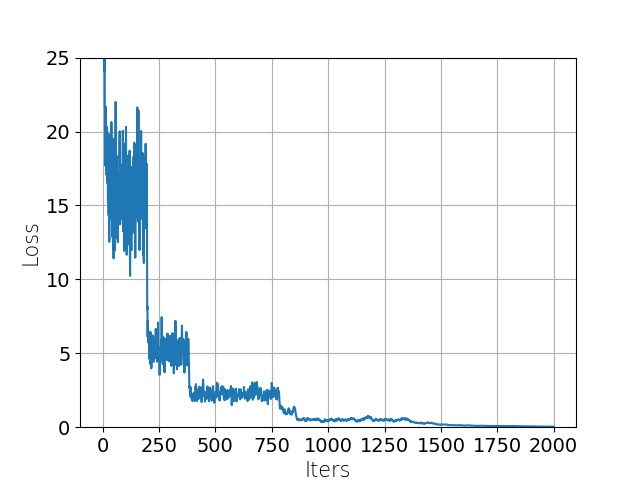}
\hspace{-0.05in}\includegraphics[trim={0.2cm 0 0.3cm 1cm},clip,width=0.35\textwidth]{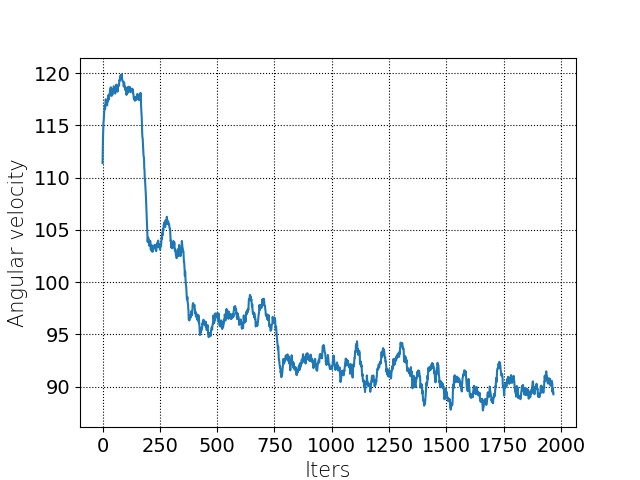}
\vspace{-0.25in}
\caption{The behavior of the loss and angular velocity for the noisy quadratic model. An optimizer is using an automatic drop of the learning rate guided by the saturation of the angular velocity. Angular velocity is averaged over $20$ iterations.}
\label{fig:NQM2}
\vspace{-0.15in}
\end{wrapfigure}

The aforementioned simple algorithm led us to derive the method for optimizing DL models using automatic learning rate drop that we refer to as AutoDrop. The obtained method is a straightforward extension of the above algorithm and is described in the next section. The extension accommodates the fundamental difference that we observed between noisy quadratic model and the DL model: the fact that in the case of DL models, lower learning rates lead to a larger noise of the angular velocity at saturation.

\begin{algorithm} 
\caption{AutoDrop} 
\label{alg:ALRD}
\begin{algorithmic}
\REQUIRE 
\STATE $\alpha_0$ and $\underline{\alpha}$: initial learning rate of the optimizer and its lower bound\\
$\theta_0$ and $\overline{\theta}$: initial threshold for the change in the angular velocity and its upper bound\\
$\rho$: learning rate drop factor\\
$x_0$ : initial model parameter vector\\
$n_d$: learning rate drop delay in number of epochs\\
\STATE
\STATE $x \leftarrow x_0$, $\alpha \leftarrow \alpha_0, \theta \leftarrow \theta_0$, $s_0 \leftarrow 0$, $t \leftarrow 0$
\STATE $drop\alpha \leftarrow false$
\WHILE{not converged}
\STATE $t \leftarrow t + 1$, $y \leftarrow x$
\STATE Train the model for one epoch with learning rate $\alpha$ and update $x$ accordingly 
\STATE $s_t \leftarrow x - y$; $\omega_t \leftarrow \angle(s_t,s_{t-1})$
\STATE
\STATE
\textsf{//Check the condition for dropping $\alpha$}
\IF{$t > 2$ and $|\omega_t - \omega_{t-1}| < \theta$}
\STATE $z \leftarrow 0$, $k \leftarrow 0$, $drop\alpha \leftarrow true$
\ENDIF
\STATE
\IF{$drop\alpha$}
\STATE $k \leftarrow k + 1$
\STATE $z \leftarrow z + k\cdot x$ \:\:\:\textsf{//Accumulate the scaled values of parameters}
\IF{$k \geq n_d$}
\STATE $x \leftarrow \frac{2z}{(n_d+1)n_d}$ \:\:\:\textsf{//Compute the exponential average of model parameters}\\
\STATE $\alpha \leftarrow \max\{\underline{\alpha}, \rho \times \alpha\}$, $\theta \leftarrow \min\{\overline{\theta}, \frac{1}{\rho} \times \theta\}$ \:\:\:\textsf{//Drop $\alpha$ and adjust $\theta$}\\
\STATE $drop\alpha \leftarrow false$
\ENDIF
\ENDIF
\ENDWHILE
\STATE
\STATE
\textsf{//Recommended setting of hyperparameters:}\\ $\underline{\alpha} = 0.0001$, $\theta_0 = 0.01^{\circ}$, $\overline{\theta} = 1^{\circ}$, and $n_d = 20$\\
$\rho, \alpha_0$ - the same as in SOTA
\end{algorithmic}
\end{algorithm}

\section{Algorithm}
\label{sec:Alg}

The algorithm for training deep learning models with automatic learning rate drop is captured in Algorithm~\ref{alg:ALRD}. The algorithm admits on its input the initial learning rate $\alpha_0$, the value of the smallest permissible learning rate $\underline{\alpha}$, initial threshold for the change in the angular velocity $\theta_0$ that will determine the first drop of the learning rate, the value of the largest permissible threshold for the change in the angular velocity $\overline{\theta}$, learning rate drop factor $\rho$ ($\rho \in (0,1)$; each time the learning rate is dropped, it is multiplied by $\rho$), initial model parameter vector $x_0$, and the learning rate drop delay $n_d$ (this hyper-parameter will be explained in the next paragraph).

The algorithm triggers the procedure for dropping the learning rate each time the angular velocity changes by less than the threshold $\theta$ between two consecutive epochs ($\theta$ is initialized with $\theta_0$). Before the learning rate is dropped (i.e., multiplied by $\rho$), the optimizer continues operating with the current learning rate for another $n_d$ epochs during which it calculates the exponential average of model parameters (parameter averaging is commonly done by practitioners and was proposed by~\citep{Polyak1992AccelerationOS}). This is done to stabilize the learning process. Finally, after each learning rate drop, the threshold for the change in the angular velocity is increased (i.e., divided by $\rho$). This is necessary as the angular velocity becomes more noisy for the lower learning rates.

AutoDrop algorithm can be thought of as a meta-scheme that can be put on the top of any optimization method for training deep learning models. Thus one can use any optimizer to update model parameters. In practice we recommend using the following setting of the hyperparameters for our algorithm: $\underline{\alpha} = 0.0001$, $\theta_0 = 0.01^{\circ}$, $\overline{\theta} = 1^{\circ}$, $n_d = 20$, and $\rho$ set in the same way as in SOTA. As will be shown in the experimental section this set of parameters guarantees good performance for a wide range of model architectures and data sets. 

\section{Theory}
\label{sec:Theory}

This section theoretically shows that decreasing the learning rate when the angular velocity saturates guarantees the sub-linear convergence rate of SGD and momentum SGD methods.

\subsection{Unified convergence analysis for SGD and momentum SGD with discrete learning rate drop}
Firstly, we present a unified theoretical framework that covers the update rule of both SGD and momentum SGD. We refer to these update rules jointly as Unified Momentum (UM) method. This framework was proposed in~\citep{yang2016unified}.

\vspace{-0.2in}
\begin{equation}
\text{UM}:\quad\left\{
\begin{aligned}\label{eq:sgdm}
      y_{t+1}&=x_t-\alpha_t\mathcal{G}(x_t;\xi_t)\\
      y_{t+1}^s&=x_t-s\alpha_t\mathcal{G}(x_t;\xi_t)\\
      x_{t+1}&=y_{t+1}+\beta(y_{t+1}^s-y_t^s)
\end{aligned}
\right.    
\end{equation}
\vspace{-0.18in}

where $t$ is the iteration index, $\beta$ is the momentum parameter, $\alpha_t$ is the learning rate at time $t$, $x_t$ is the parameter vector at time $t$, and $\mathcal{G}(x_t;\xi_t)$ is the gradient of the loss function at time $t$ computed for a data mini-batch $\xi_t$. $s$ is the factor that controls the type of optimization method in the following way:

\vspace{-0.05in}
\begin{itemize}[noitemsep,topsep=0pt,parsep=0pt,partopsep=0pt, leftmargin=0.3in]
    \item $s=0$ Heavy-Ball (HB) method:
    \vspace{-0.05in}
    $$\text{HB:}\quad x_{t+1}=x_t-\alpha_t\mathcal{G}(x_t;\xi_t)+\beta(x_t-x_{t-1})$$
    \vspace{-0.2in}
        
    \item $s=1$ Nestrov (NAG) method:
    \vspace{-0.1in}
    $$\text{NAG:}\quad \quad\left\{
    \begin{aligned}  
          y_{t+1}&=x_t-\alpha_t\mathcal{G}(x_t;\xi_t)\\
          x_{t+1}&=y_{t+1}+\beta(y_{t+1}-y_t)
    \end{aligned}
    \right.$$
    \vspace{-0.1in}
     
    \item $s=1/(1-\beta)$ Gradient Descent (GD) method:
    \vspace{-0.05in}
    $$\text{GD:}\quad x_{t+1}=x_t-\alpha_t/(1-\beta)\mathcal{G}(x_t;\xi_t).$$
    \vspace{-0.2in}
\end{itemize}
\vspace{-0.05in}

The state-of-the-art convergence analysis for common machine learning optimizers only supports constant learning rate~\citep{le2012stochastic, yang2016unified, schmidt2017minimizing, ramezani2018stability, zhang2019fast} or continuous learning rate drop schemes~\citep{wu2018wngrad, wu2019global, gower2019sgd}. However, the learning rate is dropped in a discrete fashion in many practical cases, especially in DL. Theorem~\ref{thm:sgdm_conv} provides a theoretical convergence guarantee for optimization algorithms that use discrete learning rate drop. The theorem requires some mild (easy to satisfy in practice and thus realistic) constraints on the drop gap ($k_i$), i.e.:, the frequency of dropping the learning rate. Theorem~\ref{thm:sgdm_conv} accommodates learning settings relying on discrete learning rate drops and thus is well-aligned with approaches used by practitioners. Moreover, in the next section we extend this theorem to our AutoDrop approach.

\begin{thm}\label{thm:sgdm_conv}
Suppose $f(x)$ is a convex function, $\mathbb{E}\left[\norm{ \mathcal{G}(x;\xi)-\mathbb{E}[\mathcal{G}(x;\xi)]}\right]\leq\delta^2$ and $\norm{\partial f(x)}\leq G$ for any $x$ and some non-negative $G$. Given a sequence of decreasing learning rates $\{\hat{\alpha}_i\}_{i=-1}^{n-1}\subset (0,1)$ and a sequence of integers $\{k_i\}_{i=0}^{n-1}\subset \mathbb{N}$ ($n\gg 1$), there exits constants $\kappa_1,\kappa_2$ such that
\vspace{-0.07in}
\begin{align}\label{ieq:ki_con2}
    \hat{\alpha}_i\leq (i+2)^{-1}, \quad k_i\hat{\alpha}_i\geq \kappa_1,\quad k_i\hat{\alpha}_i\hat{\alpha}_{i-1}\leq \kappa_2(i+1)^{-1},\quad \forall i=0,1,...,n-1.
\end{align}
\vspace{-0.23in}

Define a partition $\Pi:0=t_0<t_1<...<t_n=T (T=\sum_{i=0}^{n-1}k_i)$ based on the integer sequence $\{k_i\}_{i=0}^{n-1}$ such that the gap between $t_i$ and $t_{i+1}$ is $k_i$ ($k_i = t_{i+1}-t_i$). Run UM update defined in Equation~\ref{eq:sgdm} for $T$ iterations by setting the learning rate $\alpha_t$ based on a sequence $\{\hat{\alpha}_i\}_{i=-1}^{n-1}$ as
\vspace{-0.1in}
\begin{align}
    \alpha_t=\hat{\alpha}_i,\quad\text{where }t_i\leq t< t_{i+1}.
\end{align}
\vspace{-0.23in}

Then the following holds:
\vspace{-0.1in}
\begin{align*}
    \min_{t=0,...,T-1}\{\mathbb{E}[f(x_t)-f(x^*)]\}\leq&\frac{\beta(f(x_0)-f(x^*))[\log(n+1)-\log 2]}{\kappa_1(1-\beta)n}+\frac{(1-\beta)\norm{x_0-x^*}^2}{2\kappa_1n}\notag\\
    &+\frac{(2s\beta+1)(G^2+\delta^2)\kappa_2\log n}{2(1-\beta)\kappa_1n}.
\end{align*}
\vspace{-0.23in}
\end{thm}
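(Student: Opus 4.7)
The plan is to lift the unified momentum (UM) analysis of~\citep{yang2016unified} from constant learning rate to the piecewise-constant schedule induced by the partition $\Pi$, and then amortize the resulting per-iteration bound using the growth constraints in~(\ref{ieq:ki_con2}). Following the UM framework, I would introduce the auxiliary iterate $p_t = \tfrac{1}{1-\beta}x_t - \tfrac{\beta}{1-\beta}y_t^s$ with $p_0 = x_0$, which satisfies the clean first-order recursion $p_{t+1} = p_t - \tfrac{\alpha_t}{1-\beta}\mathcal{G}(x_t;\xi_t)$ independently of $s$. Expanding $\|p_{t+1}-x^*\|^2$, taking conditional expectations, and using the noise and bounded-subgradient assumptions gives the one-step inequality
$$\mathbb{E}\|p_{t+1}-x^*\|^2 \leq \mathbb{E}\|p_t-x^*\|^2 - \tfrac{2\alpha_t}{1-\beta}\mathbb{E}\langle \nabla f(x_t), p_t - x^*\rangle + \tfrac{\alpha_t^2(G^2+\delta^2)}{(1-\beta)^2}.$$

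I would then couple $p_t$ back to $x_t$ by writing $p_t - x^* = (x_t - x^*) + (p_t - x_t)$. Convexity turns the first piece into $\langle \nabla f(x_t), x_t - x^*\rangle \geq f(x_t) - f(x^*)$, which is exactly the quantity to be bounded. The second piece equals $\tfrac{s\beta\alpha_{t-1}}{1-\beta}\mathcal{G}(x_{t-1};\xi_{t-1})$; after Cauchy--Schwarz and a second use of convexity it splits into (i) a noise term of order $\alpha_t\alpha_{t-1}(G^2+\delta^2)$ and (ii) a signed $f$-valued cross term that, once summed over a block of constant learning rate, rearranges into an initial-state contribution. Collecting these contributions across the partition and controlling them in terms of $f(x_0) - f(x^*)$ yields a total residual proportional to $\tfrac{\beta}{1-\beta}(f(x_0)-f(x^*))\sum_{i=0}^{n-1}\hat{\alpha}_i$, and the upper bound $\hat{\alpha}_i\leq(i+2)^{-1}$ gives $\sum_{i=0}^{n-1}\hat{\alpha}_i \leq \log(n+1)-\log 2$, which explains the precise logarithmic factor appearing in the first term of the theorem.

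Summing the one-step inequality from $t=0$ to $T-1$ and substituting the decomposition above yields an aggregate bound of the schematic form
$$\tfrac{2}{1-\beta}\sum_{t=0}^{T-1}\alpha_t\,\mathbb{E}[f(x_t)-f(x^*)] \leq \|x_0-x^*\|^2 + \tfrac{(2s\beta+1)(G^2+\delta^2)}{(1-\beta)^2}\Bigl(\sum_{t}\alpha_t^2 + \sum_{t}\alpha_t\alpha_{t-1}\Bigr) + \tfrac{2\beta(f(x_0)-f(x^*))}{1-\beta}\sum_{i=0}^{n-1}\hat{\alpha}_i.$$
The monotonicity of $\hat{\alpha}_i$ together with $k_i\hat{\alpha}_i\hat{\alpha}_{i-1}\leq\kappa_2(i+1)^{-1}$ bounds both noise sums by $2\kappa_2(\log n + 1)$, while $k_i\hat{\alpha}_i\geq\kappa_1$ gives $\sum_t\alpha_t \geq \kappa_1 n$. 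Dividing by $\tfrac{2}{1-\beta}\sum_t\alpha_t$ and lower-bounding the left-hand side by $\min_t\mathbb{E}[f(x_t)-f(x^*)]\cdot\sum_t\alpha_t$ delivers the three-term estimate.

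The main obstacle is the rearrangement in the second paragraph: cleanly isolating the $f$-valued cross term in the $p_t-x_t$ residual, tracking it across the drop boundaries where $\alpha_t$ jumps from $\hat{\alpha}_{i-1}$ to $\hat{\alpha}_i$, and reducing the resulting $\sum_i\hat{\alpha}_i(f(x_{t_i})-f(x^*))$ to the stated $(f(x_0)-f(x^*))\sum_i\hat{\alpha}_i$ without inflating the $1/n$ rate. The three conditions in~(\ref{ieq:ki_con2}) are tailored precisely for this bookkeeping: $\hat{\alpha}_i\leq(i+2)^{-1}$ converts the initial-state accumulation into a harmonic sum, $k_i\hat{\alpha}_i\geq\kappa_1$ keeps $\sum_t\alpha_t$ linear in $n$, and $k_i\hat{\alpha}_i\hat{\alpha}_{i-1}\leq\kappa_2/(i+1)$ aggregates the noise sums globally rather than per block, which is what separates this discrete-drop argument from the constant-learning-rate analysis of~\citep{yang2016unified}.
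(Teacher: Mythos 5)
Your proposal follows essentially the same route as the paper's proof: the same auxiliary sequence (your $p_t$ is the paper's $x_t+p_t$), the same one-step expansion with two applications of convexity, the same per-block telescoping of the $f(x_{t-1})-f(x_t)$ term bounded by $f(x_0)-f(x^*)$, and the same use of the three conditions in (\ref{ieq:ki_con2}) to control $\sum_i\hat{\alpha}_i$, $\sum_t\alpha_t$, and $\sum_t\alpha_t\alpha_{t-1}$. One small slip: $p_t-x_t$ equals $\tfrac{\beta}{1-\beta}\left(x_t-x_{t-1}+s\alpha_{t-1}\mathcal{G}(x_{t-1};\xi_{t-1})\right)$, not just the gradient part, though your subsequent accounting of the resulting noise and $f$-valued cross terms shows you are in fact using the full expression.
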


\subsection{Convergence Analysis of AutoDrop}\label{subsec:thm_auto_drop}
For a fixed learning rate $\alpha$, we introduce a simplified mathematical model of the behavior of the angular velocity as a function of iterations. The model is defined below (and depicted in Figure~\ref{fig:flat}): 
\vspace{-0.1in}
\begin{equation}
    v_{\alpha}(t)=\frac{\pi}{2}(1+\epsilon\alpha)\left(1-\frac{1}{\gamma\alpha (t+1/\gamma \alpha)}\right),
    \label{eq:AVModel}
\end{equation}
\vspace{-0.23in}

where $t$ is the number of iterations, $\epsilon$ and $\gamma$ are two constants that control the asymptote and curvature of the velocity. 

\makeatletter\def\@captype{figure}\makeatother
\vspace{-0.25in}
\begin{minipage}{.38\textwidth}
\centering
\begin{figure}[H]
    \centering
    \includegraphics[width=\textwidth]{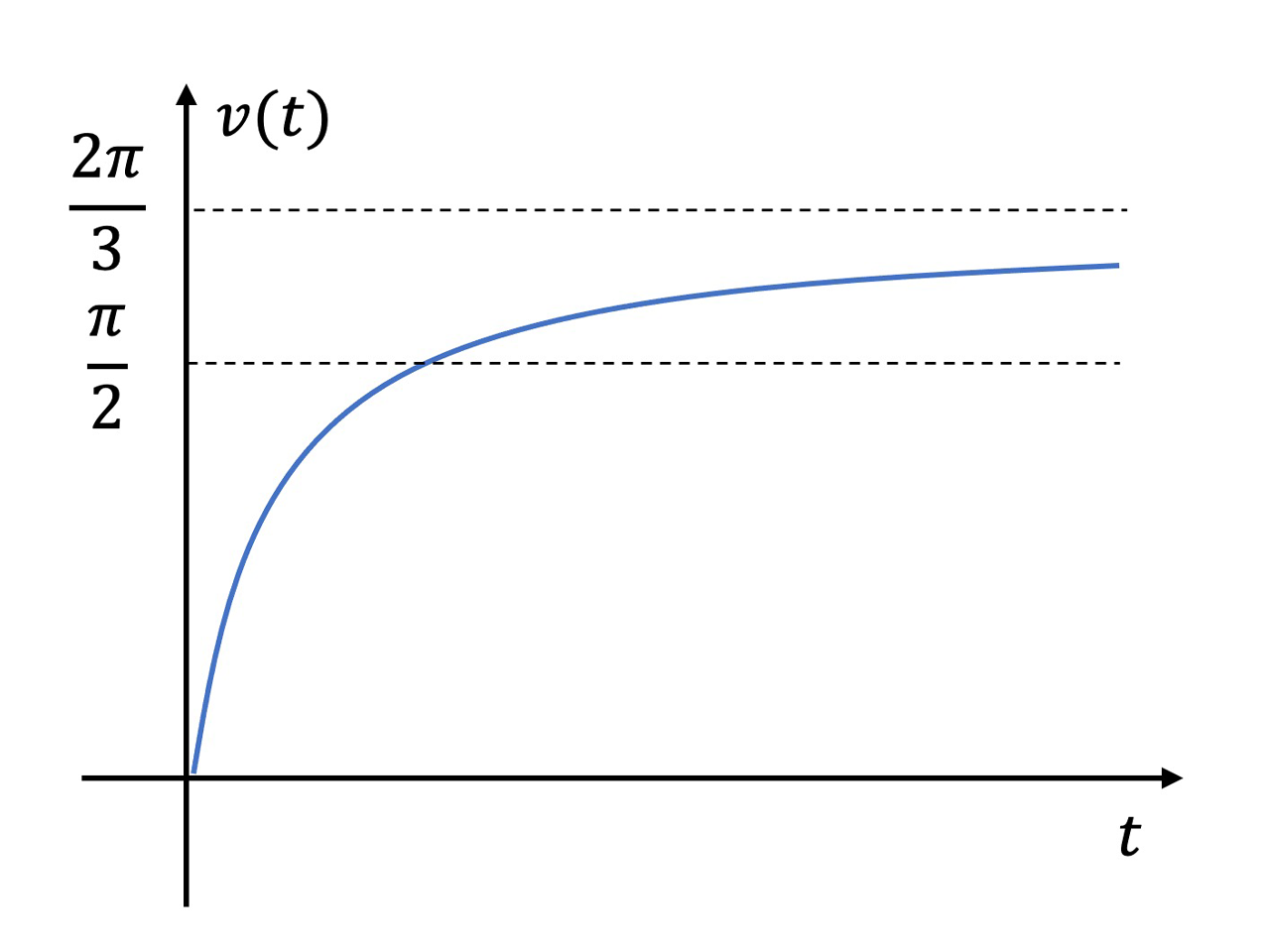}
    \vspace{-0.15in}
    \caption{Angular velocity model for a fixed learning rate $\alpha$.}
    \label{fig:flat}
\end{figure}
\vfill
\end{minipage}
\makeatletter\def\@captype{table}\makeatother
\begin{minipage}{.61\textwidth}
\centering
\begin{algorithm}[H]
    \centering
    \caption{AutoDrop (approximate)}\label{alg:LRdrop}
    \begin{algorithmic}
    \STATE \textbf{Inputs:} $x_0$: initial weight \\ 
    \STATE \textbf{Hyperparameters:} $\{\hat{\alpha}_i\}$: set of learning rates, $v_{\alpha}(t)$: ang. vel. model, $\tau_0$: init. threshold for the derivative of ang. vel. \\ 
    \STATE Initialize $i=0$, $t_0=0$, $t=0$\\
    \WHILE{$i<n$}
        \STATE Update $x_t$ via (\ref{eq:sgdm}) with learning rate $\alpha_t\!=\!\hat{\alpha}_i$.
        \IF{$v'_{\hat{\alpha}_i}(t-t_i)\leq\tau_i=\min\{\tau_0, \gamma \hat{\alpha}_i/2\}$}
          \STATE $i=i+1; t_i=t$
        \ENDIF
        \STATE $t=t+1, T=t$
    \ENDWHILE
    \RETURN $\{x_t\}_{t=0}^{T-1}$ (T: $\#$ iterations)
    \end{algorithmic}
\end{algorithm}
\end{minipage}

\begin{wrapfigure}{r}{0.61\textwidth} 
\vspace{-0.25in}
\begin{center}
\includegraphics[trim={0.3cm 0 0.3cm 2cm},clip,width=.3\textwidth]{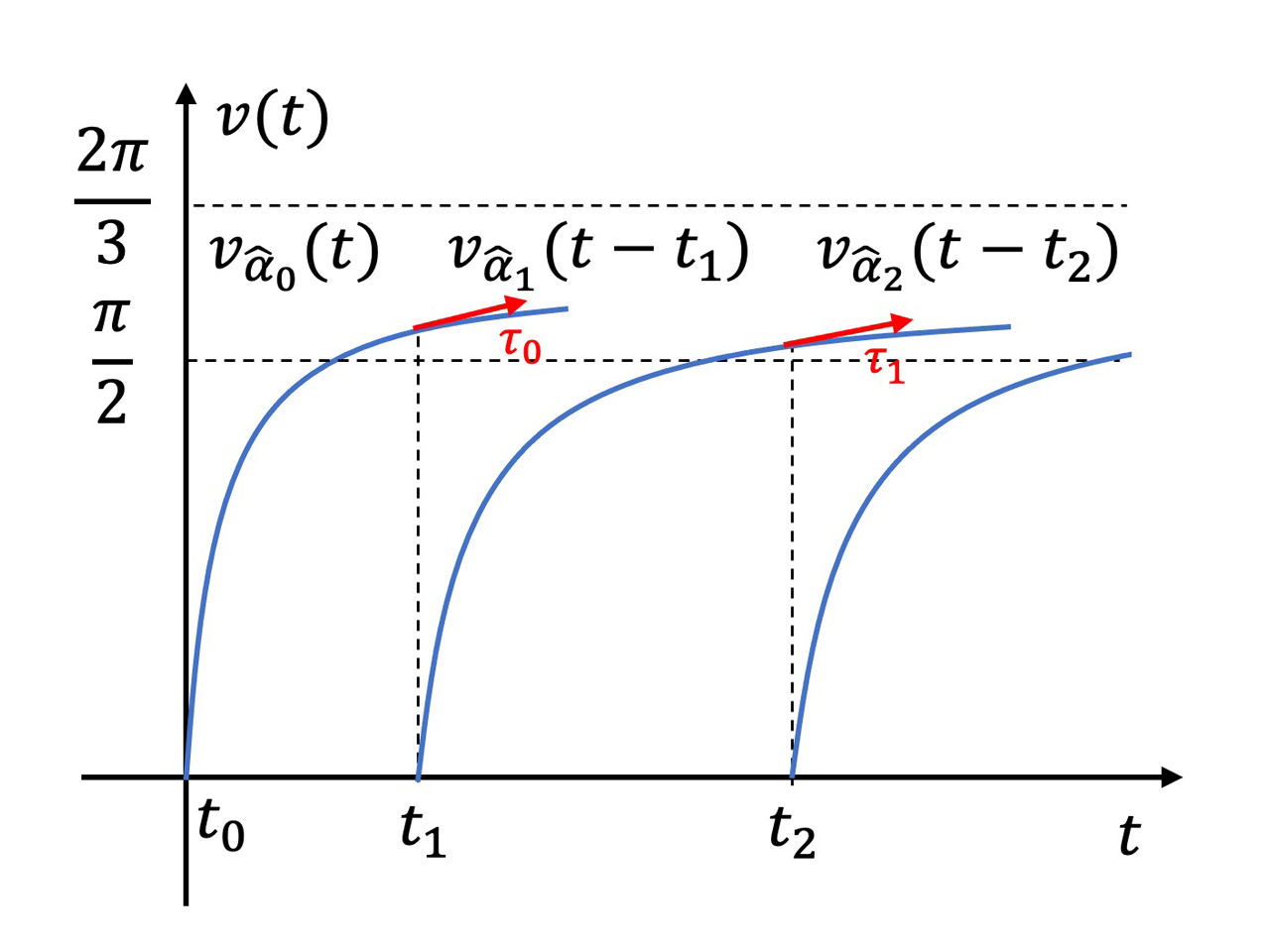}
\includegraphics[trim={0.45cm 0 0.3cm 2cm},clip,width=.3\textwidth]{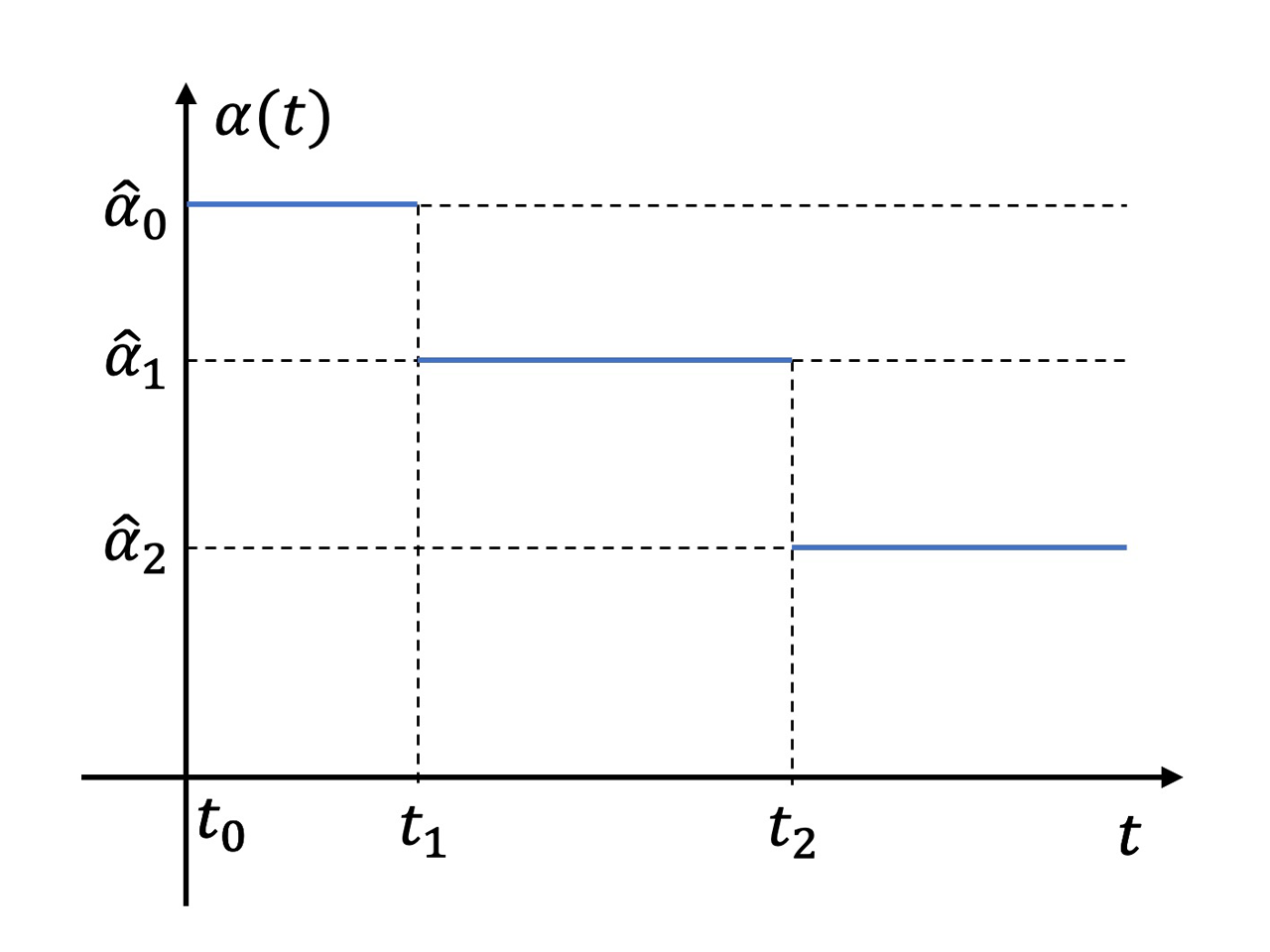}
\end{center}
\vspace{-0.25in}
\caption{The behavior of the angular velocity (\textbf{left}) and the learning rate (\textbf{right}) for Algorithm~\ref{alg:LRdrop}. The derivative threshold $\tau_i\!=\!\min\{\tau_0, \gamma \hat{\alpha}_i/2\}$.}
\label{fig:AutoLR}
\vspace{-0.1in}
\end{wrapfigure}
$v_\alpha(t)$ saturates in $\frac{\pi}{2}[1+\epsilon\alpha]$ when $t$ goes to infinity. Note that the given model complies with the property P2 empirically observed and described in Section~\ref{sec:ME}: i) if the learning rate is large enough, the angular velocity saturates at a level larger than $\pi/2$ and smaller than $2\pi/3$; ii) as the learning rate decreases, the angular velocity saturates at progressively lower levels; iii) smaller learning rate leads to a slower saturation of angular velocity; iv) when the learning rate is low enough the angular velocity saturates at $\pi/2$. Lets assume an upper-bound $\alpha_{max}$ for the learning rate. Since the limit of the angular velocity should be between $\pi/2$ and $2\pi/3$, the range of factor $\epsilon$ is set to be $(0,\frac{1}{3\alpha_{max}})$.

For the the purpose of the theoretical analysis, we drop the learning rate every time the derivative of the angular velocity decreases to a threshold $\tau_i$ (Algorithm \ref{alg:LRdrop}) instead of detecting whether the change of the angular velocity is small enough (Algorithm \ref{alg:ALRD}). Intuitively, when the derivative of the angular velocity is close to zero, we would expect the angular velocity to saturate. We are going to analyze the convergence of Algorithm \ref{alg:LRdrop}, which is an approximate version of Algorithm~\ref{alg:ALRD}. The behavior of the angular velocity and the learning rate for Algorithm~\ref{alg:LRdrop} is depicted in Figure~\ref{fig:AutoLR}.

\begin{thm}\label{thm:conv}
Suppose $f(x)$ is a convex function, $\mathbb{E}\left[\norm{ \mathcal{G}(x;\xi)-\mathbb{E}[\mathcal{G}(x;\xi)]}\right]\leq\delta^2$ and $\norm{\partial f(x)}\leq G$ for any $x$ and some non-negative $G$. Given the sequence of the learning rates $\{\hat{\alpha}_i\}_{i=-1}^{n-1}$ such that $\hat{\alpha}_i=(i+1)^{-\frac{2}{3}}$, parameters $\epsilon\in(0,\frac{1}{3\hat{\alpha}_0})$ and $\gamma$ defining the angular velocity model $v_{\alpha}(t)$ (Equation~\ref{eq:AVModel}), and the initial threshold $\tau_0$ ($\tau_0<2$) for the derivative of the angular velocity, the sequence of weights $\{x_t\}_{t=0}^{T-1}$ generated by Algorithm~\ref{alg:LRdrop} satisfies
\vspace{-0.05in}
\begin{align}
    \min_{t=0,...,T-1}\{\mathbb{E}[f(x_t)-f(x^*)]\}\leq&\frac{\beta(f(x_0)\!-\!f(x^*))[\log\left(\sqrt{\frac{2T}{\kappa_1}}\!+\!1\right)\!-\!\log 2]}{\kappa_1(1-\beta)\left[\sqrt{\frac{2T}{\kappa_2}}-3\right]}+\frac{(1-\beta)\norm{x_0-x^*}^2}{2\kappa_1\left[\sqrt{\frac{2T}{\kappa_2}}-3\right]}\notag\\
    &+\frac{(2s\beta+1)(G^2+\delta^2)\kappa_2\log \left(\sqrt{\frac{2T}{\kappa_1}}\right)}{2(1-\beta)\kappa_1\left[\sqrt{\frac{2T}{\kappa_2}}-3\right]}\\
    =&O\left(\log T/\sqrt{T}\right),
\end{align}
\vspace{-0.2in}

where $\kappa_1=\frac{\sqrt{\pi}-1}{\gamma}$ and $\kappa_2=\frac{1}{\gamma}\sqrt{2\pi/3\tau_0}$.
\end{thm}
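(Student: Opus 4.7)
The plan is to reduce Theorem~\ref{thm:conv} to Theorem~\ref{thm:sgdm_conv} by (i) computing the phase length $k_i$ that Algorithm~\ref{alg:LRdrop} spends at learning rate $\hat{\alpha}_i$, (ii) verifying that the resulting drop-gap sequence satisfies the three hypotheses in (\ref{ieq:ki_con2}), and (iii) converting the per-drop bound from Theorem~\ref{thm:sgdm_conv} into a per-iteration bound via the relation $T=\sum_{i=0}^{n-1}k_i$.

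First I would differentiate the angular velocity model (\ref{eq:AVModel}) to obtain $v'_\alpha(t)=\frac{\pi(1+\epsilon\alpha)}{2\gamma\alpha(t+1/(\gamma\alpha))^2}$, set this equal to $\tau_i=\min\{\tau_0,\gamma\hat{\alpha}_i/2\}$, and solve for $t$ in closed form to get $k_i=\sqrt{\pi(1+\epsilon\hat{\alpha}_i)/(2\gamma\hat{\alpha}_i\tau_i)}-1/(\gamma\hat{\alpha}_i)$. In the branch where $\tau_i=\gamma\hat{\alpha}_i/2$, using the bound $\epsilon\hat{\alpha}_i<1/3$ (from $\epsilon<1/(3\hat{\alpha}_0)$), this gives $k_i\hat{\alpha}_i=(\sqrt{\pi(1+\epsilon\hat{\alpha}_i)}-1)/\gamma\geq(\sqrt{\pi}-1)/\gamma=\kappa_1$, which produces the lower bound in~(\ref{ieq:ki_con2}). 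In the branch where $\tau_i=\tau_0$, the same formula yields $k_i\hat{\alpha}_i\leq\sqrt{\pi\hat{\alpha}_i(1+\epsilon\hat{\alpha}_i)/(2\gamma\tau_0)}$, and plugging in $\hat{\alpha}_i=(i+1)^{-2/3}$ gives, after applying the product $k_i\hat{\alpha}_i\hat{\alpha}_{i-1}\leq\hat{\alpha}_{i-1}^{3/2}\sqrt{\pi(1+\epsilon\hat{\alpha}_i)/(2\gamma\tau_0)}$ and bounding $(i)^{-1}\leq 2/(i+1)$, the upper bound $k_i\hat{\alpha}_i\hat{\alpha}_{i-1}\leq\kappa_2/(i+1)$ with $\kappa_2=\sqrt{2\pi/(3\tau_0)}/\gamma$. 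The first hypothesis $\hat{\alpha}_i\leq(i+2)^{-1}$ follows (up to trivial rescaling absorbed into the constants) from the decay rate of $(i+1)^{-2/3}$.

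Once Theorem~\ref{thm:sgdm_conv} is applied, it produces an upper bound of order $O(\log n/n)$ with $n$ the number of drops. To recast this in terms of $T$, I would lower-bound $T=\sum_{i=0}^{n-1}k_i\geq \kappa_1\sum_{i=0}^{n-1}\hat{\alpha}_i^{-1}$, evaluate the resulting partial sum (roughly a Riemann comparison) to obtain $T\geq\kappa_1 n^2/2-O(n)$, and then invert to $n\geq\sqrt{2T/\kappa_1}-3$; a matching upper bound of the same order uses the $\kappa_2$ branch to give $n\leq\sqrt{2T/\kappa_2}$ inside the logarithm. Substituting these relations for $n$ into the three terms of Theorem~\ref{thm:sgdm_conv}'s bound yields exactly the displayed expression in Theorem~\ref{thm:conv}, and keeping only the dominant $\log T/\sqrt T$ scaling gives the final $O(\log T/\sqrt T)$ rate.

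The main obstacle is that conditions (b) and (c) of~(\ref{ieq:ki_con2}) pull $k_i$ in opposite directions, and neither threshold branch alone can satisfy both simultaneously across all $i$: the lower bound $k_i\hat{\alpha}_i\geq\kappa_1$ fails when the $\tau_0$ branch is active (because the square-root growth is too slow), while the upper bound $k_i\hat{\alpha}_i\hat{\alpha}_{i-1}\leq\kappa_2/(i+1)$ fails when the $\gamma\hat{\alpha}_i/2$ branch is active (because $k_i\hat{\alpha}_i$ is constant while $\hat{\alpha}_{i-1}\asymp i^{-2/3}$ decays too slowly). The entire role of the $\min$ in the threshold $\tau_i=\min\{\tau_0,\gamma\hat{\alpha}_i/2\}$ is to pick whichever branch gives the \emph{valid} inequality at each $i$; making this case split rigorous and tracking the breakpoint $\hat{\alpha}_i\approx 2\tau_0/\gamma$, together with carefully absorbing lower-order terms from $\sqrt{1+\epsilon\hat{\alpha}_i}$, is where the bulk of the technical work lies.
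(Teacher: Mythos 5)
Your overall reduction is exactly the paper's: differentiate the model in Equation~\ref{eq:AVModel}, solve $v'_{\hat{\alpha}_i}(k_i)=\tau_i$ to get $k_i=(\gamma\hat{\alpha}_i)^{-1/2}\bigl[\sqrt{\pi(1+\epsilon\hat{\alpha}_i)/(2\tau_i)}-(\gamma\hat{\alpha}_i)^{-1/2}\bigr]$, verify the three conditions of Theorem~\ref{thm:sgdm_conv}, and convert $n$ to $T$ through $T=\sum_{i}k_i$. However, your closing paragraph --- the claim that each threshold branch violates one of the two $k_i$ conditions and that the $\min$ exists to ``pick whichever branch gives the valid inequality'' --- is a misreading that would prevent the proof from closing. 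Since $\tau_i=\min\{\tau_0,\gamma\hat{\alpha}_i/2\}\leq\gamma\hat{\alpha}_i/2$ at \emph{every} $i$, the lower bound $k_i\hat{\alpha}_i\geq(\sqrt{\pi}-1)/\gamma=\kappa_1$ holds unconditionally; your assertion that it fails on the $\tau_0$ branch forgets that this branch is only active when $\tau_0\leq\gamma\hat{\alpha}_i/2$, which is precisely the inequality that restores the bound. Dually, $1/\tau_i=\max\{1/\tau_0,\,2/(\gamma\hat{\alpha}_i)\}$ and both options of the max yield $k_i\hat{\alpha}_i\leq\kappa_2$ (using $\epsilon\hat{\alpha}_i<1/3$ and $\tau_0<2$), so the upper bound also holds at every $i$. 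No breakpoint tracking is needed; this uniform two-sided bound $\kappa_1/\hat{\alpha}_i\leq k_i\leq\kappa_2/\hat{\alpha}_i$ is the entire content of parts (i) and (ii) of the paper's proof. Moreover, the branch is dictated by $i$, not chosen by you, so a ``pick the valid branch'' strategy cannot work even in principle: for large $i$ the $\gamma\hat{\alpha}_i/2$ branch is forced.

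The real difficulty you half-detect is the learning-rate schedule, and your proposal does not resolve it. The third condition in (\ref{ieq:ki_con2}), $k_i\hat{\alpha}_i\hat{\alpha}_{i-1}\leq\kappa_2(i+1)^{-1}$, follows from $k_i\hat{\alpha}_i\leq\kappa_2$ only if $\hat{\alpha}_{i-1}\leq(i+1)^{-1}$. With the schedule $\hat{\alpha}_i=(i+1)^{-2/3}$ stated in the theorem this fails ($\hat{\alpha}_{i-1}=i^{-2/3}\gg i^{-1}$), and, as you correctly note, no branch can repair it since $k_i\hat{\alpha}_i\geq\kappa_1$ always. The paper's proof silently works with $\hat{\alpha}_i=(i+2)^{-1}$ via Lemma~\ref{lma:k_bound2}, under which all three conditions hold and $T=\sum_i k_i\asymp n^2$, giving $\sqrt{2T/\kappa_2}-3\leq n\leq\sqrt{2T/\kappa_1}$. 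Your own step $T\geq\kappa_1 n^2/2-O(n)$ likewise presupposes $\hat{\alpha}_i^{-1}\asymp i$; with $(i+1)^{-2/3}$ you would instead get $T\asymp n^{5/3}$ and a different final rate than $O(\log T/\sqrt{T})$. To complete the argument you must adopt the $1/(i+2)$ schedule (and flag the mismatch with the theorem statement), rather than introduce a case split on the threshold branches.
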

Theorem~\ref{thm:conv} can be obtained by extending Theorem~\ref{thm:sgdm_conv} to the setting accommodating the angular velocity model from Equation~\ref{eq:AVModel} and  guarantees sub-linear convergence rate of Algorithm~\ref{alg:LRdrop}.

\section{Experiments}
\label{sec:ER}

\begin{wraptable}{r}{0.6\textwidth} 
\vspace{-0.15in}
\caption{Test errors of AutoDrop and baselines reported in the literature. For CIFAR-$10$ and CIFAR-$100$ we ran each experiment four times with different random seeds. We report the mean and standard deviation of the final test error (at the $200^{\text{th}}$ epoch). For ImageNet, we ran each experiment once and report the final test error (at the $105^{\text{th}}$ epoch). $^{\dagger}$ follows the the setup of \citep{zhang2019lookahead}. $^{\ddagger}$ follows the the setup of \citep{Zagoruyko2016WRN}. $^{*}$ follows the the setup of \citep{He2016DeepRL}.} 
\vspace{-0.1in}
\label{tab: cifar}
\centering
\begin{tabular}{|p{1.8cm}||p{3cm}|p{2cm}|}
\hline
Model&Method &Test Error [\%]\\
\hline
\multirow{4}{8em}{ResNet-$18$ CIFAR-$10$} 
&Baseline$^{\dagger}$ ($\rho=0.2$)  &$4.87\pm0.085$\\
&AutoDrop ($\rho=0.1$) &$5.07\pm0.465$\\
&AutoDrop ($\rho=0.2$) &$\mathbf{4.61 \pm 0.173}$\\
&AutoDrop ($\rho=0.5$) &$\mathbf{4.71\pm0.111}$\\
\hline

\multirow{4}{8em}{WRN-$28$x$10$ CIFAR-$10$}
&Baseline$^{\ddagger}$ ($\rho=0.2$) & $3.77 \pm 0.05$\\
&AutoDrop ($\rho=0.1$) & $\mathbf{3.73 \pm 0.26}$\\
&AutoDrop ($\rho=0.2$) & $\mathbf{3.73 \pm 0.10}$ \\
&AutoDrop ($\rho=0.5$) & $4.29 \pm 1.13$\\
\hline

\multirow{4}{8em}{ResNet-$34$ CIFAR-$100$}
&Baseline$^{\dagger}$ ($\rho=0.2$)  & $21.91 \pm 0.20$ \\
&AutoDrop ($\rho=0.1$) & $23.27 \pm 0.48$ \\
&AutoDrop ($\rho=0.2$) & $\mathbf{21.82 \pm 0.50}$ \\
&AutoDrop ($\rho=0.5$) & $\mathbf{21.43 \pm 0.29}$ \\
\hline

\multirow{4}{8em}{WRN-$40$x$10$ CIFAR-$100$ } 
&Baseline$^{\ddagger}$ ($\rho=0.2$) & $19.16 \pm 0.11$\\
&AutoDrop ($\rho=0.1$) & $\mathbf{18.25 \pm 0.33}$ \\
&AutoDrop ($\rho=0.2$) & $\mathbf{18.17 \pm 0.25}$\\
&AutoDrop ($\rho=0.5$) & $23.15 \pm 3.43$ \\
\hline 

\multirow{2}{8em}{ResNet-18 ImageNet } 
&Baseline$^{*}$ ($\rho=0.1$) &$29.93$ \\
&AutoDrop ($\rho=0.1$) & $\mathbf{29.80}$\\
\hline
\end{tabular}
\vspace{-0.25in}
\end{wraptable}

In this section, we compare the performance of our method, AutoDrop, that automatically adjusts the learning rate, with the SOTA optimization approaches for training DL models that instead manually drop the learning rate. The comparison is performed on the popular DL architectures and benchmark data sets. Our method was run with three different settings of the learning rate drop factor $\rho$, whereas the remaining hyper-parameters were set as recommended in Section~\ref{sec:Alg}. The baselines that we compare with are SOTA approaches taken from the referenced papers that rely on different variants of SGD. Finally, the codes of our method will be publicly released.

In Table~\ref{tab: cifar} we show the final test errors obtained on CIFAR-$10$, CIFAR-$100$, and ImageNet data sets. Our method shows better performance in terms of the final test error compared to the baseline approaches while automatically selecting the epochs for dropping the learning rate. Across all the experiments on CIFAR data sets, AutoDrop run with the learning drop factor $\rho = 0.2$ (the drop factor used by the baselines), was always among the winning AutoDrop strategies. For ImageNet the baseline recommended using $\rho = 0.1$ and again for this setting AutoDrop performed favorably. Furthermore, in Figure~\ref{fig:TC} we report an exemplary plot capturing the behavior of the learning rate, train loss, and test error as a function of the number of epochs. 

\begin{figure}[h]
\vspace{-0.1in}
\centering
\includegraphics[width=.4\textwidth]{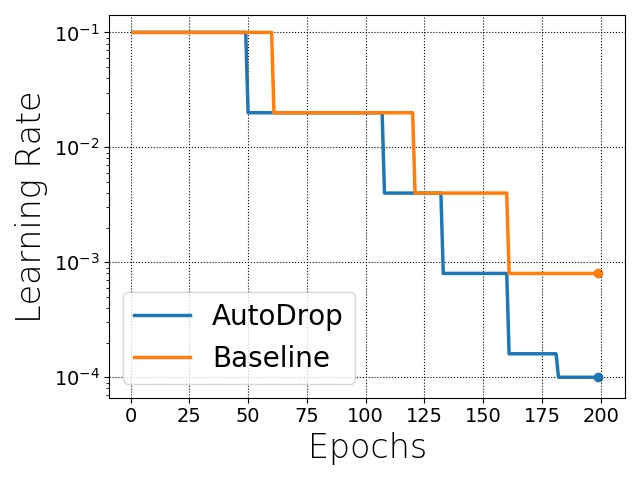} \qquad \qquad
\includegraphics[width=.4\textwidth]{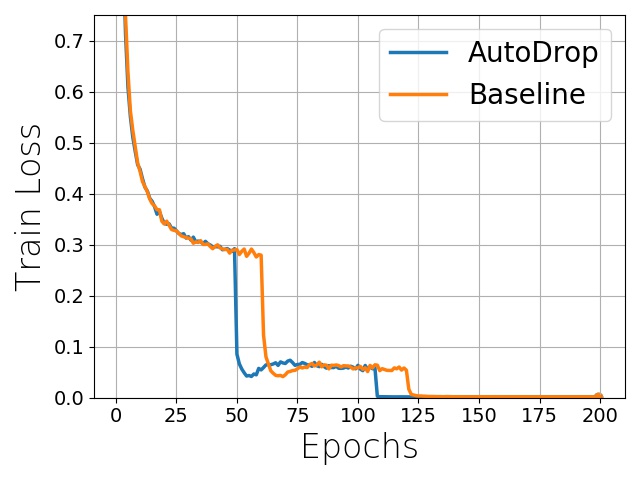} \\
\vspace{-0.05in}
\includegraphics[width=.4\textwidth]{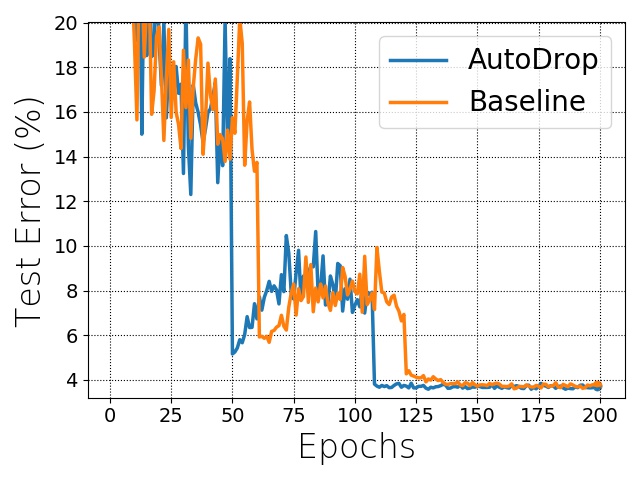} \qquad \qquad
\includegraphics[width=.4\textwidth]{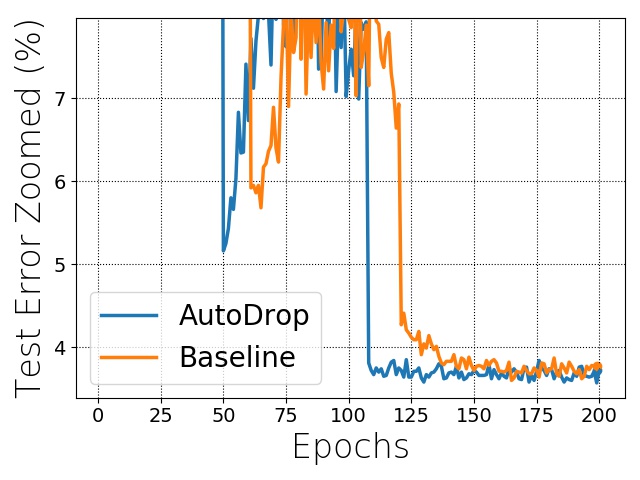}
\vspace{-0.15in}
\caption{Experimental curves for WRN-$28$x$10$ model and CIFAR-$10$ data set: learning rate, train loss, test error, and zoomed test error.}
\label{fig:TC}
\vspace{-0.2in}
\end{figure}

\begin{minipage}{.45\textwidth}
\vspace{-0.15in}
\caption{Test error [\%] of AutoDrop and the baseline for different initial learning rates. Resnet-$18$ on CIFAR-$10$.}
\vspace{-0.15in}
\label{tab:lrs}
\centering
\begin{tabular}{|c||c|c|}
\hline
Initial &Baseline &AutoDrop \\
LR & ($\rho = 0.2$) & ($\rho = 0.2$)\\
\hline
{$0.15$} & $5.04 \pm 0.19$ & $\mathbf{4.90 \pm 0.20}$\\\hline
{$0.1$}  & $4.80 \pm 0.12$ & $\mathbf{4.62 \pm 0.14}$\\\hline
{$0.05$} & $4.87 \pm 0.09$ & $\mathbf{4.61 \pm 0.17}$\\\hline
{$0.03$} & $5.07 \pm 0.28$ & $\mathbf{4.73 \pm 0.25}$\\\hline
\end{tabular}
\vspace{-0.1in}
\end{minipage}
\begin{minipage}{0.54\textwidth}
\vspace{-0.1in}
\centering
\includegraphics[width=0.65\textwidth]{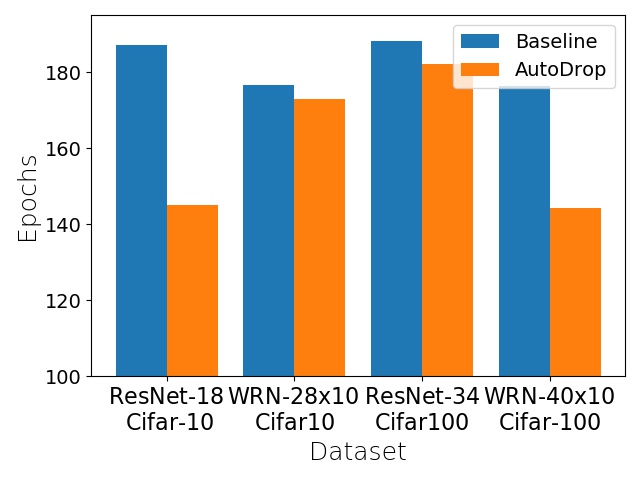}
\vspace{-0.17in}
\captionof{figure}{The average number of training epochs needed by an optimizer to achieve the lowest test error. AutoDrop and Baseline use $\rho = 0.2$.}
\label{fig:CE}
\end{minipage}

Next, in Table~\ref{tab:lrs} we verify if AutoDrop is more robust to the choice of the initial learning rate than the baseline. We ran an experiment on CIFAR-10 and ResNet-18 and confirmed that indeed across different choices of the initial learning rate, AutoDrop consistently outperforms the baseline. 

Finally, Figure~\ref{fig:CE} is confronting the convergence speed of our method and the baseline by reporting the average number of training epochs needed by AutoDrop and the baseline to achieve the lowest test error. We ran each experiment four times with different random seeds and report the mean value. Clearly, AutoDrop is faster. 

\vspace{-0.05in}
\section{Conclusions}
\label{sec:Con}
\vspace{-0.05in}

This paper is motivated by a growing need to develop DL optimization techniques that are more automated in order to increase their scalability and improve the accessibility to DL technology by a wider range of participants. The selection of hyperparameters for training DL models, and especially the learning rate scheduling, is a very hard problem and still remains largely unsolved in the literature. We provide a new algorithm, AutoDrop, for adjusting the learning rate drop during training of DL models that works online and can be run on the top of any DL optimization scheme. It is furthermore a very simple algorithm to implement and use. AutoDrop enjoys favorable empirical performance compared to SOTA training approaches in terms of test error and convergence speed. Finally, our method has a theoretical underpinning that we show, and enjoys sub-linear convergence. 

\section{Acknowledgement}
The authors would like to acknowledge that the NSF Award number $2041872$ sponsored the research work presented in this paper.

\bibliography{iclr2022_conference}
\bibliographystyle{iclr2022_conference}

\appendix
\newpage
\clearpage

\vskip 0.15in
\vskip -\parskip
\begin{center}
{\LARGE AutoDrop: Training Deep Learning Models with Automatic Learning Rate Drop \\ (Supplementary Material) \par} 
\end{center}
\vskip 0.25in
\vskip -\parskip
\hrule height 1pt

\section{Proof for Theorem \ref{thm_1}}

\begin{proof}[Proof for Theorem \ref{thm_1}]
  First note that if the learning rate is chosen as specified, then each of the trajectories is a contraction map. By Banach’s fixed point theorem, they each have a unique fixed point. Clearly 
  $$\mathbb{E}_{SGD}^*=\lim_{t\to\infty}\mathbb{E}[x_t]=0.$$
  For the variance we can solve for the fixed points directly. Define $\mathbb{V}_{SGD}^*=\lim_{t\to\infty}\mathbb{V}[x_t]$,
  \begin{align*}
      &\mathbb{V}_{SGD}^*=(I-\gamma A)^2\mathbb{V}_{SGD}^*+\gamma A^2\Sigma,\\
      \Longrightarrow&\mathbb{V}_{SGD}^*=\frac{\gamma^2 A^2\Sigma}{I-(I-\gamma A)^2}=diag(\frac{\alpha^2a_1^2\sigma_1^2}{1-(1-\alpha a_1)^2},\cdots,\frac{\alpha^2a_n^2\sigma_n^2}{1-(1-\alpha a_n)^2}),
  \end{align*}
  where $\sigma_i^2$ is the i-th diagonal element of the variance matrix $\Sigma$ of a gaussian noise $c_t$.
  Because
  \begin{align*}
      \mathbb{V}_{SGD}^*=\lim_{t\to\infty}\mathbb{V}[x_t]&=\lim_{t\to\infty}\mathbb{E}\left[(x_t-\mathbb{E}[x_t])(x_t-\mathbb{E}[x_t])^T\right]\\\notag
       &=\lim_{t\to\infty}\mathbb{E}[x_tx_t^T]\\\notag
      &=diag(\lim_{t\to\infty}\mathbb{E}[x_{t,1}^2],\lim_{t\to\infty}\mathbb{E}[x_{t,2}^2],\cdots,\lim_{t\to\infty}\mathbb{E}[x_{t,n}^2]),
  \end{align*}
  we have
  \begin{align}\label{eq:x}
      \lim_{t\to\infty}\mathbb{E}[x_{t,i}^2]=\frac{\alpha^2a_i^2\sigma_i^2}{1-(1-\alpha a_i)^2}\quad i=1,\cdots,n.
  \end{align}
  Since $c_t\sim N(0,\Sigma)$, 
  \begin{align}\label{eq:c}
      \lim_{t\to\infty}\mathbb{E}[c_{t,i}^2]=\sigma_i^2\quad i=1,\cdots,n.
  \end{align}
  The update formula with learning rate $\alpha$ is
  \begin{align}
      x_{t+1}=x_t-\alpha\nabla\hat{L}(x_t)= x_t-\alpha A(x_t-c_t),\quad c_t\sim N(0,\Sigma).
  \end{align}
  For the next iteration, the update formula can be written as
  \begin{align}
      x_{t+2}&=x_{t+1}-\alpha\nabla\hat{L}(x_{t+1})\\\notag
      &=x_{t+1}-\alpha A(x_{t+1}-c_{t+1}),\quad c_{t+1}\sim N(0,\Sigma)\\\notag
      &=x_{t+1}-\alpha A(x_t-\alpha A(x_t-c_t)),\quad c_t,c_{t+1}\sim N(0,\Sigma)\\\notag
      &=x_{t+1}-\alpha A(x_t-c_{t+1})+\alpha^2A^2(x_t-c_t),\quad c_t,c_{t+1}\sim N(0,\Sigma).
  \end{align}
  Define the step at iteration t as $s_t=x_{t+1}-x_t$, then the inner product of two consecutive steps can be written as
  \begin{align}
      <s_t,s_{t+1}>=&<-\alpha A(x_t-c_t), -\alpha A(x_t-c_{t+1})+\alpha^2A^2(x_t-c_t)>\\\notag
      =&\alpha^2(x_t-c_t)^TA^2(x_t-c_{t+1})-\alpha^3(x_t-c_t)^TA^3(x_t-c_t)\\\notag
      =&\alpha^2\left[x_t^TA^2x_t\!-\!x_t^TA^2c_{t+1}\!-\!c_t^TA^2x_t\!+\!c_t^TA^2c_{t+1}\!-\!\alpha x_t^TA^3x_t\!+\!2\alpha x_t A^3c_t\!-\!\alpha c_t^TA^3c_t\right].
  \end{align}
  Therefore, the trajectory of the expectation of the inner product converges to
  \begin{align}\label{eq:dot}
      I^*=\lim_{t\to\infty}\mathbb{E}[<s_t,s_{t+1}>]&=\alpha^2\left[\lim_{t\to\infty}\mathbb{E}[x_t^TA^2(I-\alpha A)]x_t-\alpha\lim_{t\to\infty}\mathbb{E}[c_t^TA^3c_t]\right]\\\notag
      &=\alpha^2\left[\sum_{i=1}^na_i^2(1-\alpha a_i)\lim_{t\to\infty}\mathbb{E}[x_{t,i}^2]-\sum_{i=1}^n\alpha a_i^3\lim_{t\to\infty}\mathbb{E}[c_{t,i}^2]\right]\\\notag
      &=\alpha^2\sum_{i=1}^n\left[a_i^2(1-\alpha a_i)\frac{\alpha a_i\sigma_i^2}{2-\alpha a_i}-\alpha a_i^3\sigma_i^2\right]\\\notag
      &=\alpha^2\sum_{i=1}^n\alpha a_i^3\sigma_i^2\left[\frac{1-\alpha a_i}{2-\alpha a_i}-1\right]\\\notag
      &=-\alpha^3\sum_{i=1}^n\frac{a_i^3\sigma_i^2}{2-\alpha a_i}.
  \end{align}
  The norm of step $s_t$ at iteration t is written as
  \begin{align}
      \norm{s_t}^2&=\norm{\alpha A(x_t-c_t)}^2\\\notag
      &=\alpha^2(x_t-c_t)^TA^2(x_t-c_t)\\\notag
      &=\alpha^2(x_t^TA^2x_t-2x_t^TA^2c_t+c_t^TA^2c_t).
  \end{align}
  Therefore the trajectory of the expectation of the norm of $s_t$ converges to
  \begin{align}\label{eq:norm}
      N^*=\lim_{t\to\infty}\mathbb{E}[\norm{s_t}^2]&=\alpha^2\lim_{t\to\infty}\mathbb{E}[x_t^TA^2x_t]+\alpha^2\lim_{t\to\infty}\mathbb{E}[c_t^TA^2c_t]\\\notag
      &=\alpha^2\sum_{i=1}^na_i^2\left(\mathbb{E}[x_{t,i}^2]+\mathbb{E}[c_{t ,i}^2]\right)\\\notag
      &=\alpha^2\sum_{i=1}^na_i^2\sigma^2\left(\frac{\alpha a_i}{2-\alpha a_i}+1\right)\\\notag
      &=2\alpha^2\sum_{i=1}^n\frac{a_i^2\sigma^2}{2-\alpha a_i}.
  \end{align}
  Here, in order to draw meaningful conclusions we make certain simplifications and proceed by approximating $\mathbb{E}[cos(\angle(s_t,s_{t+1}))]\approx\mathbb{E}[<s_t,s_{t+1}>]/\mathbb{E}[\norm{s_t}\norm{s_{t+1}}]$. 
  
  Because $cos(\angle(s_t,s_{t+1}))=\frac{<s_t,s_{t+1}>}{\norm{s_t}\norm{s_{t+1}}}$ and $\norm{s}_t$ converges when t is large enough, then
  \begin{align}\label{eq:cos}
      \lim_{t\to\infty}\mathbb{E}[cos(\angle(s_t,s_{t+1}))]\approx\lim_{t\to\infty}\frac{\mathbb{E}[<s_t,s_{t+1}>]}{\mathbb{E}[\norm{s_t}^2]}.
  \end{align}
  
  Since $I^*=\lim_{t\to\infty}\mathbb{E}[cos(\angle(s_t,s_{t+1}))]$ and $N^*=\lim_{t\to\infty}\mathbb{E}[\norm{s_t}^2]$ are both bounded and not equal to 0,
  \begin{align}\label{eq:cos}
      \lim_{t\to\infty}\mathbb{E}[cos(\angle(s_t,s_{t+1}))]\approx\frac{\lim_{t\to\infty}\mathbb{E}[<s_t,s_{t+1}>]}{\lim_{t\to\infty}\mathbb{E}[\norm{s_t}^2]}.
  \end{align}
  By combining formula (\ref{eq:cos}), (\ref{eq:dot}) and (\ref{eq:norm}), we obtain that the expectation of cosine value converges to
  \begin{align}
      C^*\!\!=\!\!\lim_{t\to\infty}\mathbb{E}[cos(\angle(s_t,s_{t+1}))]\!\approx\!\frac{I^*}{N^*}&\!=\!-\frac{\alpha}{2}\frac{\sum_{i=1}^n\frac{a_i^3\sigma_i^2}{2-\alpha a_i}}{\sum_{i=1}^n\frac{a_i^2\sigma_i^2}{2-\alpha a_i}}\!\geq\!-\frac{\alpha}{2}\max_i a_i\frac{\sum_{i=1}^n\frac{a_i^2\sigma_i^2}{2-\alpha a_i}}{\sum_{i=1}^n\frac{a_i^2\sigma_i^2}{2-\alpha a_i}}\!=\!-\frac{\alpha\max_i a_i}{2}
  \end{align}
  Since $I-\alpha A\succ 0$ implies $\alpha a_i<1$ for arbitrary $i$, then $C^*\in [-\frac{1}{2}, 0]$ and the angle is between 90 degree to 120 degrees.
\end{proof}

\section{Proof for Theorem \ref{thm:sgdm_conv}}
Proof in this section in inspired by \cite{yang2016unified}.

\begin{proof}[Proof for Theorem 2]
 We denote $\mathcal{G}(x_t;\xi_t)=\mathcal{G}(x_t)=\mathcal{G}_t$. The update formula (\ref{eq:sgdm}) implies the following recursions:
\begin{align}
    x_{t+1}+p_{t+1}=&x_t+p_t-\frac{\alpha_t}{1-\beta}\mathcal{G}(x_t)\\
    v_{t+1}=&\beta v_t+((1-\beta)s-1)\alpha_t\mathcal{G}(x_t),
\end{align}
where $v_t=\frac{1-\beta}{\beta}p_t$ and $p_t$ is given by
\begin{equation}
p_t=\left\{
\begin{aligned}\label{eq:sgdm3}
      &\frac{\beta}{1-\beta}(x_t-x_{t-1}+s\alpha_{t-1}\mathcal{G}(x_{t-1})), \quad k\geq1\\
      &0, \quad k=0
\end{aligned}
\right.  .  
\end{equation}
Define $\delta_t=\mathcal{G}_t-\partial f(x_t)$ and let $x^*$ be the optimal point. From the above recursions we have
\begin{align}
    &\norm{x_{t+1}+p_{t+1}-x^*}^2\notag\\
    =&\norm{x_t+p_t-x^*}^2\!-\!\frac{2\alpha_t}{1-\beta}(x_t+p_t-x^*)^T\mathcal{G}_t\!+\!\left(\frac{\alpha_t}{1-\beta}\right)^2\norm{\mathcal{G}_t}^2\notag\\
    =&\norm{x_t+p_t-x^*}^2\!-\!\frac{2\alpha_t}{1-\beta}(x_t-x^*)^T\mathcal{G}_t\!-\!\frac{2\alpha_t\beta}{(1-\beta)^2}(x_{t}-x_{t-1})^T\mathcal{G}_t\notag\\
    &-\frac{2s\alpha_t\alpha_{t-1}\beta}{(1-\beta)^2}\mathcal{G}_{t-1}^T\mathcal{G}_t\!+\!\left(\frac{\alpha_t}{1-\beta}\right)^2\norm{\mathcal{G}_t}^2\notag\\
    =&\norm{x_t+p_t-x^*}^2-\frac{2\alpha_t}{1-\beta}(x_t-x^*)^T(\delta_t+\partial f(x_t))-\frac{2\alpha_t\beta}{(1-\beta)^2}(x_{t}-x_{t-1})^T(\delta_t+\partial f(x_t))\notag\\
    &-\frac{2s\alpha_t\alpha_{t-1}\beta}{(1-\beta)^2}(\delta_{t-1}+\partial f(x_{t-1}))^T(\delta_t+\partial f(x_t))+\left(\frac{\alpha_t}{1-\beta}\right)^2\norm{\delta_t+\partial f(x_t)}^2.
\end{align}
Note that
\begin{align*}
    &\mathbb{E}[(x_t-x^*)^T(\delta_t+\partial f(x_t))]=\mathbb{E}[(x_t-x^*)^T\partial f(x_t)]\\
    &\mathbb{E}[(x_{t}-x_{t-1})^T(\delta_t+\partial f(x_t))]=\mathbb{E}[(x_{t}-x_{t-1})^T\partial f(x_t)]\\
    &\mathbb{E}[(\delta_{t-1}+\partial f(x_{t-1}))^T(\delta_t+\partial f(x_t))]=\mathbb{E}[(\delta_{t-1}+\partial f(x_{t-1}))^T\partial f(x_t)]=\mathbb{E}[\mathcal{G}_{t-1}^T\partial f(x_t)]\\
    &\mathbb{E}[\norm{\delta_t+\partial f(x_t)}^2]=\mathbb{E}[\norm{\delta_t}^2]+\mathbb{E}[\norm{\partial f(x_t)}^2].
\end{align*}
Taking the expectation on both sides gives the following
\begin{align}\label{eq:norm_exp2}
    &\mathbb{E}[\norm{x_{t+1}+p_{t+1}-x^*}^2]\notag\\
    =&\mathbb{E}[\norm{x_t+p_t-x^*}^2]-\frac{2\alpha_t}{1-\beta}\mathbb{E}[(x_t-x^*)^T\partial f(x_t)]-\frac{2\alpha_t\beta}{(1-\beta)^2}\mathbb{E}[(x_{t}-x_{t-1})^T\partial f(x_t)]\notag\\
    &-\frac{2s\alpha_t\alpha_{t-1}\beta}{(1-\beta)^2}\mathbb{E}[\mathcal{G}_{t-1}^T\partial f(x_t)]+\left(\frac{\alpha_t}{1-\beta}\right)^2(\mathbb{E}[\norm{\delta_t}^2]+\mathbb{E}[\norm{\partial f(x_t)}^2]).
\end{align}
Moreover, since f is convex,$\mathbb{E}\left[\norm{ \mathcal{G}(x;\xi)-\mathbb{E}[\mathcal{G}(x;\xi)]}\right]\leq\delta^2$, and $\norm{\nabla f(x)}\leq G$, then for any $x$
\begin{align*}
    &f(x_t)-f(x^*)\leq(x_t-x^*)^T\partial f(x_t)\\
    &f(x_t)-f(x_{t-1})\leq(x_t-x_{t-1})^T\partial f(x_t)\\
    &-\mathbb{E}[\mathcal{G}_{t-1}^T\partial f(x_t)]\leq\frac{\mathbb{E}[\norm{\mathcal{G}_{t-1}}^2+\norm{\partial f(x_t)}^2]}{2}\leq\delta^2/2+G^2\leq\delta^2+G^2\\
    &\mathbb{E}[\norm{\delta_t}^2]\leq\delta^2,\quad\mathbb{E}[\norm{\partial f(x_t)}^2]\leq G^2.
\end{align*}
Therefore, (\ref{eq:norm_exp2}) can be rewritten as
\begin{align}\label{eq:exp_update2}
    \mathbb{E}[\norm{x_{t+1}+p_{t+1}-x^*}^2]\leq&\mathbb{E}[\norm{x_{t}+p_{t}-x^*}^2]-\frac{2\alpha_t}{1-\beta}\mathbb{E}[f(x_t)-f(x^*)]\\\notag
    &-\frac{2\alpha_t\beta}{(1-\beta)^2}\mathbb{E}[f(x_t)-f(x_{t-1})]+\frac{2s\beta\alpha_t\alpha_{t-1}+\alpha_t^2}{(1-\beta)^2}(G^2+\delta^2).
\end{align}
Since $\hat{\alpha}_i$ is decreasing, it implies that $\alpha_t$ is non-increasing. Thus, (\ref{eq:exp_update2}) could be upper-bounded as
\begin{align}\label{eq:exp_update2}
    \mathbb{E}[\norm{x_{t+1}+p_{t+1}-x^*}^2]\leq&\mathbb{E}[\norm{x_{t}+p_{t}-x^*}^2]-\frac{2\alpha_t}{1-\beta}\mathbb{E}[f(x_t)-f(x^*)]\\\notag
    &-\frac{2\alpha_t\beta}{(1-\beta)^2}\mathbb{E}[f(x_t)-f(x_{t-1})]+\frac{(2s\beta+1)\alpha_t\alpha_{t-1}}{(1-\beta)^2}(G^2+\delta^2).
\end{align}
Taking $t=0,...,T-1$ and $x_{-1}=x_0$, and then summing all the inequalities gives
\begin{align*}
    \sum_{t=0}^{T-1}\mathbb{E}[\norm{x_{t+1}\!+\!p_{t+1}\!-\!x^*}^2]\leq&\sum_{t=0}^{T-1}\mathbb{E}[\norm{x_{t}+p_{t}-x^*}^2]-\sum_{t=0}^{T-1}\frac{2\alpha_t}{1-\beta}\mathbb{E}[f(x_t)-f(x^*)]\notag\\
    &-\!\sum_{t=0}^{T-1}\frac{2\alpha_t\beta}{(1\!-\!\beta)^2}\mathbb{E}[f(x_t)\!-\!f(x_{t-1})]
    \!+\!\frac{(2s\beta\!+\!1)(G^2\!+\!\delta^2)}{(1\!-\!\beta)^2}\sum_{t=0}^{T-1}\alpha_t\alpha_{t-1}.
\end{align*}
Therefore,
\begin{align*}
    \frac{2}{1\!-\!\beta}\sum_{t=0}^{T-1}\alpha_t\mathbb{E}[f(x_t)\!-\!f(x^*)]\leq&\norm{x_0\!-\!x^*}^2\!-\!\norm{x^{T}\!+\!p_{T}\!-\!x^*}\!+\!\frac{2\beta}{(1\!-\!\beta)^2}\sum_{t=0}^{T-1}\alpha_t\mathbb{E}[f(x_{t-1})\!-\!f(x_t)]\\
    &+\frac{(2s\beta+1)(G^2+\delta^2)}{(1-\beta)^2}\sum_{t=0}^{T-1}\alpha_t\alpha_{t-1},
\end{align*}
since $\alpha_{T-1}\leq...\leq\alpha_1\leq\alpha_0<1$, $\min_{t=0,...,T-1}\{\mathbb{E}[f(x_t)-f(x^*)]\}\leq\mathbb{E}[f(x_t)-f(x^*)](\forall t=0,...,T-1)$. Then
\begin{align*}
    \frac{2}{1-\beta}\min_{t=0,...,T-1}\{\mathbb{E}[f(x_t)-f(x^*)]\}\sum_{t=0}^{T-1}\alpha_t\leq&\norm{x_0-x^*}^2+\frac{2\beta}{(1-\beta)^2}\sum_{t=0}^{T-1}\alpha_t\mathbb{E}[f(x_{t-1})-f(x_t)]\notag\\
    &+\frac{(2s\beta+1)(G^2+\delta^2)\sum_{t=0}^{T-1}\alpha_t\alpha_{t-1}}{(1-\beta)^2}.
\end{align*}
Moreover, $\alpha_t=\hat{\alpha}_i (t_i\leq t< t_{i+1})$ implies that
\begin{align*}
    \frac{2}{1\!-\!\beta}\min_{t=0,...,T-1}\{\mathbb{E}[f(x_t)\!-\!f(x^*)]\}\sum_{t=0}^{T-1}\alpha_t\leq&\norm{x_0\!-\!x^*}^2\!+\!\frac{2\beta}{(1\!-\!\beta)^2}\sum_{i=0}^{n-1}\hat{\alpha}_i\mathbb{E}[f(x_{t_i})\!-\!f(x_{t_{i+1}})]\notag\\
    &+\frac{(2s\beta+1)(G^2+\delta^2)\sum_{t=0}^{T-1}\alpha_t\alpha_{t-1}}{(1-\beta)^2}.
\end{align*}
Since $\mathbb{E}[f(x_{t_i})-f(x_{t_{i+1}})]$ is always upper-bounded by $f(x_0)-f(x^*)$, we have
\begin{align*}
    \frac{2}{1-\beta}\min_{t=0,...,T-1}\{\mathbb{E}[f(x_t)-f(x^*)]\}\sum_{t=0}^{T-1}\alpha_t\leq&\norm{x_0-x^*}^2+\frac{2\beta}{(1-\beta)^2}[f(x_0)-f(x^*)]\sum_{i=0}^{n-1}\hat{\alpha}_i\\
    &+\frac{(2s\beta+1)(G^2+\delta^2)\sum_{t=0}^{T-1}\alpha_t\alpha_{t-1}}{(1-\beta)^2}.
\end{align*}
After simplification, we have
\begin{align}\label{ieq:min2}
    \min_{t=0,...,T-1}\{\mathbb{E}[f(x_t)-f(x^*)]\}\leq&\frac{(1-\beta)\norm{x_0-x^*}^2}{2\sum_{t=0}^{T-1}\alpha_t}+\frac{\beta[f(x_0)-f(x^*)]\sum_{i=0}^{n-1}\hat{\alpha}_i}{(1-\beta)\sum_{t=0}^{T-1}\alpha_t}\notag\\
    &+\frac{(2s\beta+1)(G^2+\delta^2)\sum_{t=0}^{T-1}\alpha_t\alpha_{t-1}}{2(1-\beta)\sum_{t=0}^{T-1}\alpha_t}.
\end{align}
Because $\hat{\alpha}_i\leq (i+2)^{-1}$, $k_i\hat{\alpha}_i\geq \kappa_1(i+2)^{-\frac{1}{3}},$ $k_i\hat{\alpha}_i\hat{\alpha}_{i-1}\leq \kappa_2(i+1)^{-\frac{2}{3}},\forall i=0,1,...,n-1(n\gg1)$,
\begin{align}
    \sum_{i=0}^{n-1}\hat{\alpha}_i&\leq\sum_{i=0}^{n-1}(i+2)^{-1}=\int_{0}^{n-1}(i+2)^{-1}=\log(n+1)-\log(2)\label{eq:sumi_2}\\
    \sum_{t=0}^{T-1}\alpha_t&=\sum_{i=0}^{n-1}k_i\hat{\alpha}_i\geq\sum_{i=0}^{n-1}\kappa_1=\kappa_1n\label{eq:sumt}\\
    \sum_{t=0}^{T-1}\alpha_t\alpha_{t-1}&\leq\sum_{i=0}^{n-1}k_i\hat{\alpha}_i\hat{\alpha}_{i-1}\leq\kappa_2\sum_{i=0}^{n-1}(i+1)^{-1}=\kappa_2\int_{0}^{n-1}(i+1)^{-1}=\kappa_2\log n\label{eq:sumt2_2}.
\end{align}
Substituting (\ref{eq:sumi_2}-\ref{eq:sumt2_2}) into inequality (\ref{ieq:min2}) gives 
\begin{align*}
    \min_{t=0,...,T-1}\{\mathbb{E}[f(x_t)-f(x^*)]\}\leq&\frac{\beta(f(x_0)-f(x^*))[\log(n+1)-\log 2]}{\kappa_1(1-\beta)n}+\frac{(1-\beta)\norm{x_0-x^*}^2}{2\kappa_1n}\notag\\
    &+\frac{(2s\beta+1)(G^2+\delta^2)\kappa_2\log n}{2(1-\beta)\kappa_1n}.
\end{align*}
\end{proof}

\subsection{Proof for Theorem \ref{thm:conv}}
First, we introduce Lemma~\ref{lma:k_bound2} which will be used in the proof for Theorem \ref{thm:conv}. We prove this lemma later in this section.

\begin{lma}\label{lma:k_bound2}
If sequences $\{\hat{\alpha}_i\}_{i=-1}^{n-1}\subset(0,1)$ and $\{k_i\}_{i=0}^n\subset\mathbb{N}$ satisfy:
\begin{align*}
    \hat{\alpha}_i=(i+2)^{-1},\quad \frac{\kappa_1}{\hat{\alpha}_i}\leq k_i\leq\frac{\kappa_2}{\hat{\alpha}_i},
\end{align*}
where $\kappa_1$, $\kappa_2$ are constants, then 
\begin{align}\label{ieq:ki_con2_2}
    \hat{\alpha}_i\leq (i+2)^{-1}, \quad k_i\hat{\alpha}_i\geq \kappa_1,\quad k_i\hat{\alpha}_i\hat{\alpha}_{i-1}\leq \kappa_2(i+1)^{-1},\quad \forall i=0,1,...,n-1.
\end{align}
Moreover, suppose $T=\sum_{i=0}^{n-1}k_i$. If $n\gg 1$ the following holds
\begin{align}\label{eq:T2}
    \frac{\kappa_1n(n+3)}{2}\leq T\leq\frac{\kappa_2n(n+3)}{2}.
\end{align}
\end{lma}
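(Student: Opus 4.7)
The plan is a direct computation: each assertion in the lemma follows by substituting the given bounds on $k_i$ and invoking the explicit form $\hat{\alpha}_i=(i+2)^{-1}$. No inductive or asymptotic machinery is needed.

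First I would verify the three inequalities in (\ref{ieq:ki_con2_2}) in order. The first, $\hat{\alpha}_i\leq(i+2)^{-1}$, is immediate from the definition $\hat{\alpha}_i=(i+2)^{-1}$ (which holds with equality). The second, $k_i\hat{\alpha}_i\geq\kappa_1$, follows by multiplying the hypothesis $k_i\geq\kappa_1/\hat{\alpha}_i$ through by the positive number $\hat{\alpha}_i$. The third, $k_i\hat{\alpha}_i\hat{\alpha}_{i-1}\leq\kappa_2(i+1)^{-1}$, is obtained by first rewriting the upper bound as $k_i\hat{\alpha}_i\leq\kappa_2$ and then multiplying by $\hat{\alpha}_{i-1}=(i+1)^{-1}$.

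Next I would derive the bounds on $T$. Substituting $\hat{\alpha}_i=(i+2)^{-1}$ into the hypothesis $\kappa_1/\hat{\alpha}_i\leq k_i\leq\kappa_2/\hat{\alpha}_i$ gives $\kappa_1(i+2)\leq k_i\leq\kappa_2(i+2)$. Summing over $i=0,\ldots,n-1$ and using the arithmetic-series identity $\sum_{i=0}^{n-1}(i+2)=\sum_{j=2}^{n+1}j=\frac{(n+1)(n+2)}{2}-1=\frac{n(n+3)}{2}$ yields the claimed inequalities $\frac{\kappa_1 n(n+3)}{2}\leq T\leq\frac{\kappa_2 n(n+3)}{2}$.

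The only minor subtlety, and arguably the ``hardest'' point, is that $k_i$ must be a natural number while the bracketing quantities $\kappa_1(i+2)$ and $\kappa_2(i+2)$ need not be integers. For $n\gg 1$ the discretization error per $i$ is at most $O(1)$, so the total rounding contribution to $T$ is $O(n)$, which is negligible compared with the leading $\Theta(n^{2})$ term and can be absorbed into $\kappa_1,\kappa_2$; this is the precise role of the hypothesis $n\gg 1$ in the statement. Overall there is no real obstacle --- the proof is a short bookkeeping exercise that repackages the algebraic consequences of the hypotheses into the exact form needed downstream by Theorem~\ref{thm:conv}.
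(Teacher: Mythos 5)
Your proof is correct and takes essentially the same route as the paper's: it verifies the three inequalities in (\ref{ieq:ki_con2_2}) directly from the hypotheses and then sums the bracketing bounds $\kappa_1(i+2)\leq k_i\leq\kappa_2(i+2)$ using the arithmetic series $\sum_{i=0}^{n-1}(i+2)=\frac{n(n+3)}{2}$. One small remark: your closing paragraph on discretization is superfluous, since the hypotheses already bracket the integers $k_i$ exactly and the bounds on $T$ therefore hold with no rounding error at all --- the $n\gg 1$ assumption does no real work in either your argument or the paper's (whose passing appeal to an integral is likewise vestigial, as it too evaluates the sum exactly).
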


\begin{proof}[Proof for Theorem \ref{thm:conv}]
The derivative of the angular velocity model is:
$$v_{\alpha}'(t)=\frac{\pi(1+\epsilon\alpha)}{2\gamma\alpha(t+1/\gamma\alpha)^2}.$$
Define the gaps of partition $\Pi:0=t_0<t_1<...<t_n=T$ derived from the Algorithm \ref{alg:LRdrop} as
$$k_i=t_{i+1}-t_i,\quad\forall i=0,...,n-1.$$
Since we drop the learning rate every time the derivative of the angular velocity is smaller that the threshold $\tau_i=\min\{\tau_0,\gamma\hat{\alpha}_i/2\}$, we have
\begin{align*}
    v_{\hat{\alpha}_i}'(k_i)=\tau_i\Longrightarrow k_i=(\gamma\hat{\alpha}_i)^{-\frac{1}{2}}\left[\sqrt{\frac{\pi(1+\epsilon\hat{\alpha}_i)}{2\tau_i}}-(\gamma\hat{\alpha}_i)^{-\frac{1}{2}}\right].
\end{align*}

\begin{itemize}
    \item[i)] From $\tau_i=\min\{\tau_0,\gamma\hat{\alpha}_i/2\}$, we have $\tau_i\leq\gamma\hat{\alpha}_i/2$. Therefore,
    \begin{align}
        k_i\geq&(\gamma\hat{\alpha}_i)^{-\frac{1}{2}}\left[\sqrt{\pi(1+\epsilon\hat{\alpha}_i)}\times(\gamma\hat{\alpha}_i)^{-\frac{1}{2}}-(\gamma\hat{\alpha}_i)^{-\frac{1}{2}}\right]\notag\\
        =&\frac{1}{\gamma\hat{\alpha}_i}\left[\sqrt{\pi(1+\epsilon\hat{\alpha}_i)}-1\right]\notag\\
        \geq&\frac{\sqrt{\pi}-1}{\gamma \hat{\alpha}_i}
    \end{align}
    \item[ii)] From $\tau_i=\min\{\tau_0,\gamma\hat{\alpha}_i/2\}$, we have
    \begin{align}
        k_i\leq&(\gamma\hat{\alpha}_i)^{-\frac{1}{2}}\times \sqrt{\frac{\pi(1+\epsilon\hat{\alpha}_i)}{2\tau_i}}\notag\\
        =&(\gamma\hat{\alpha}_i)^{-\frac{1}{2}}\max\left\{\sqrt{\frac{\pi(1+\epsilon\hat{\alpha}_i)}{2\tau_0}},\sqrt{\pi(1+\epsilon\hat{\alpha}_i)}(\gamma\hat{\alpha}_i)^{-\frac{1}{2}}\right\}\notag\\
        =&\max\left\{(\gamma\hat{\alpha}_i)^{-\frac{1}{2}}\sqrt{\frac{\pi(1+\epsilon\hat{\alpha}_i)}{2\tau_0}},\frac{1}{\gamma\hat{\alpha}_i}\sqrt{\pi(1+\epsilon\hat{\alpha}_i)}\right\}\notag\\
        \leq&\frac{1}{\gamma\hat{\alpha}_i}\max\left\{\sqrt{\frac{\pi(1+\epsilon\hat{\alpha}_i)}{2\tau_0}},\sqrt{\pi(1+\epsilon\hat{\alpha}_i)}\right\}\notag.
    \end{align}
    Since $\epsilon\in(0,\frac{1}{3\hat{\alpha}_0})$ and $\tau_0<2$, we could conclude
    \begin{align*}
        k_i\leq\frac{1}{\gamma\hat{\alpha}_i}\max\left\{\sqrt{\frac{2\pi}{3\tau_0}},\sqrt{\frac{4\pi}{3}}\right\}\leq\frac{1}{\gamma\hat{\alpha}_i}\sqrt{\frac{2\pi}{3\tau_0}}.
    \end{align*}
\end{itemize}
Combine i) and ii), we have
\begin{align}
    \frac{\sqrt{\pi}-1}{\gamma}\times\frac{1}{\hat{\alpha}_i}\leq k_i\leq\frac{1}{\gamma}\sqrt{\frac{2\pi}{3\tau_0}}\times\frac{1}{\hat{\alpha}_i}.
\end{align}
Define $\kappa_1=\frac{\sqrt{\pi}-1}{\gamma}$ and $\kappa_2=\frac{1}{\gamma}\sqrt{\frac{2\pi}{3\tau_0}}$. By Lemma \ref{lma:k_bound2}, we have
\begin{align}\label{ieq:ki_co3_2}
    \hat{\alpha}_i\leq (i+2)^{-1}, \quad k_i\hat{\alpha}_i\geq \kappa_1,\quad k_i\hat{\alpha}_i\hat{\alpha}_{i-1}\leq \kappa_2(i+1)^{-1},\quad \forall i=0,1,...,n-1.
\end{align}
Then, by combining (\ref{ieq:ki_co3_2}) with Theorem \ref{thm:sgdm_conv} we could conclude
that the sequence $\{x_t\}_{t=0}^{T-1}$ generated by the Algorithm \ref{alg:LRdrop} satisfies
\begin{align}\label{ieq:conv_n2}
    \min_{t=0,...,T-1}\{\mathbb{E}[f(x_t)-f(x^*)]\}\leq&\frac{\beta(f(x_0)-f(x^*))[\log(n+1)-\log 2]}{\kappa_1(1-\beta)n}+\frac{(1-\beta)\norm{x_0-x^*}^2}{2\kappa_1n}\notag\\
    &+\frac{(2s\beta+1)(G^2+\delta^2)\kappa_2\log n}{2(1-\beta)\kappa_1n}.
\end{align}
By Equation (\ref{eq:T2}) in Lemma \ref{lma:k_bound2} we have that
\begin{align*}
    \frac{\kappa_1n(n+3)}{2}\leq T\leq\frac{\kappa_2n(n+3)}{2}.
\end{align*}
Therefore
\begin{align}\label{ieq:n_bound2}
   \sqrt{\frac{2T}{\kappa_2}}-3\leq n\leq \sqrt{\frac{2T}{\kappa_1}}.
\end{align}
Combining (\ref{ieq:n_bound2}) with (\ref{ieq:conv_n2}) gives
\begin{align*}
    \min_{t=0,...,T-1}\{\mathbb{E}[f(x_t)-f(x^*)]\}\leq&\frac{\beta(f(x_0)-f(x^*))[\log\left(\sqrt{\frac{2T}{\kappa_1}}+1\right)-\log 2]}{\kappa_1(1-\beta)\left[\sqrt{\frac{2T}{\kappa_2}}-3\right]}+\frac{(1-\beta)\norm{x_0-x^*}^2}{2\kappa_1\left[\sqrt{\frac{2T}{\kappa_2}}-3\right]}\notag\\
    &+\frac{(2s\beta+1)(G^2+\delta^2)\kappa_2\log \left(\sqrt{\frac{2T}{\kappa_1}}\right)}{2(1-\beta)\kappa_1\left[\sqrt{\frac{2T}{\kappa_2}}-3\right]}\\
    =&O\left(\frac{\log T}{\sqrt{T}}\right)
\end{align*}
\end{proof}

\subsection{Proof for Lemma \ref{lma:k_bound2}}
\begin{proof}[Proof for Lemma \ref{lma:k_bound2}]
 First, we show bounds from (\ref{ieq:ki_con2_2}) one by one:
\begin{enumerate}
    \item[i)]  $\hat{\alpha}_i=(i+2)^{-1}\leq(i+2)^{-1}$.
    \item[ii)] $k_i\hat{\alpha}_i\geq\kappa_1$.
    \item[iii)] $k_i\hat{\alpha}_i\hat{\alpha}_{i=1}\leq\kappa_2\hat{\alpha}_{i-1}=\kappa_2(i+1)^{-1}\leq\kappa_2(i+1)^{-1}$.
    
\end{enumerate}
Secondly, we compute $T=\sum_{i=0}^{n-1}k_i$ according to the definition of $k_i$. Because $n\gg1$, the sum of the sequence could be treated as an integral:
    \begin{align*}
        T=\sum_{i=0}^{n-1}k_i\leq\kappa_2\sum_{i=0}^{n-1}\frac{1}{\hat{\alpha}_i}=\kappa_2\sum_{i=0}^{n-1}(i+2)=\frac{\kappa_2n(n+3)}{2},
    \end{align*}
and
    \begin{align*}
        T=\sum_{i=0}^{n-1}k_i\geq\kappa_1\sum_{i=0}^{n-1}\frac{1}{\hat{\alpha}_i}=\kappa_1\sum_{i=0}^{n-1}(i+2)=\frac{\kappa_1n(n+3)}{2}.
    \end{align*}
\end{proof}

\newpage
\section{Experimental Details}
\label{sec:ED}

\subsection{Data sets and models}
\textbf{The CIFAR-$\mathbf{10}$ and CIFAR-$\mathbf{100}$ data sets} \citep{cifar} consist of $50$ K training images, with $10$ and $100$ different classes respectively. For CIFAR-$10$ experiments we used a ResNet-$18$ \citep{He2016DeepRL} and a WRN-$28$x$10$ \citep{Zagoruyko2016WRN} models. For CIFAR-$100$ experiments we used a ResNet-$34$ \citep{He2016DeepRL} and a WRN-$40$x$10$ \citep{Zagoruyko2016WRN} models. We do not use the dropout \citep{srivastava2014dropout} layers for WRN models in our experiments. The implementation involving WRN architecture and CIFAR data set relies on publicly available codes\footnote{https://github.com/meliketoy/wide-resnet.pytorch}.


\textbf{The ImageNet (ILSVRC-$\mathbf{2012}$) data set} \citep{imagenet_cvpr09} consists of $1.2$ M images divided into $1$ K categories. We train a ResNet-$18$ \citep{He2016DeepRL} model. We use model implementation from PyTorch official model zoo\footnote{https://pytorch.org/vision/stable/models.html}.

\subsection{Training setup}
For CIFAR-10 and CIFAR-100 experiments we refer to \citep{zhang2019lookahead} and \citep{Zagoruyko2016WRN} for ResNet and WRN models respectively. For ImageNet experiments we follow the training procedure proposed by \citep{He2016DeepRL}. 

In all our experiments, for the baseline we use the same setting of hyperparameters (including the learning rate schedule) as recommended in the referenced literature.

\newpage
\subsection{Additional results}
\begin{figure}[h]
\vspace{-0.1in}
\centering
\includegraphics[width=.4\textwidth]{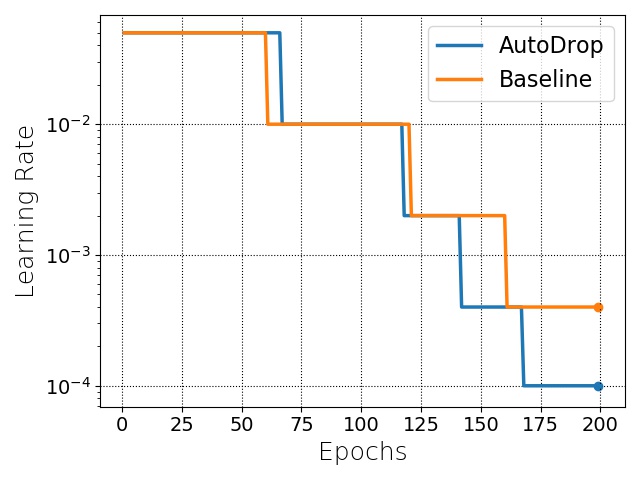}
\includegraphics[width=.4\textwidth]{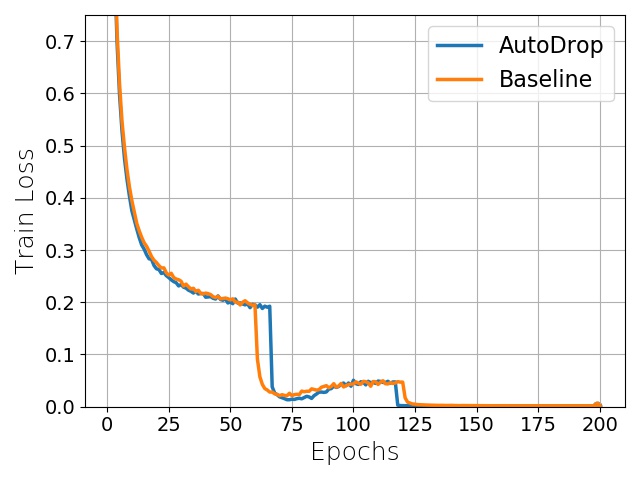}
\includegraphics[width=.4\textwidth]{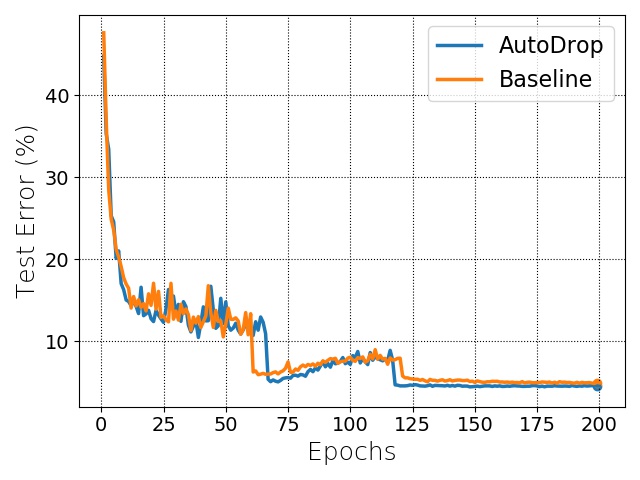}
\includegraphics[width=.4\textwidth]{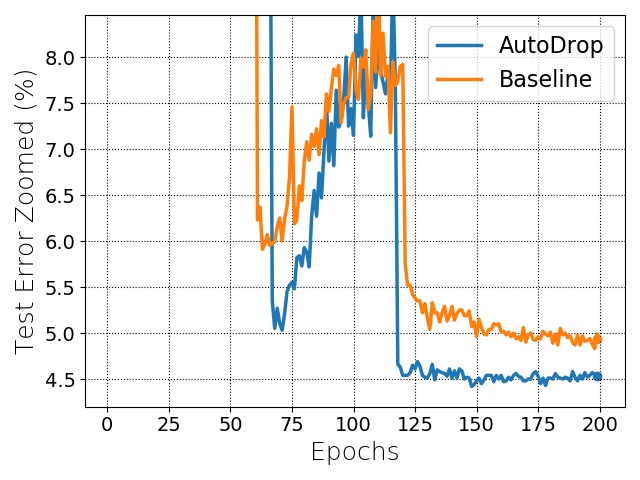}
\vspace{-0.15in}
\caption{Experimental curves for ResNet-$18$ model and CIFAR-$10$ data set. Top (from left to right): learning rate and train loss. Bottom (from left to right): test error and zoomed test error.}
\end{figure}

\begin{figure}[h]
\vspace{-0.1in}
\centering
\includegraphics[width=.4\textwidth]{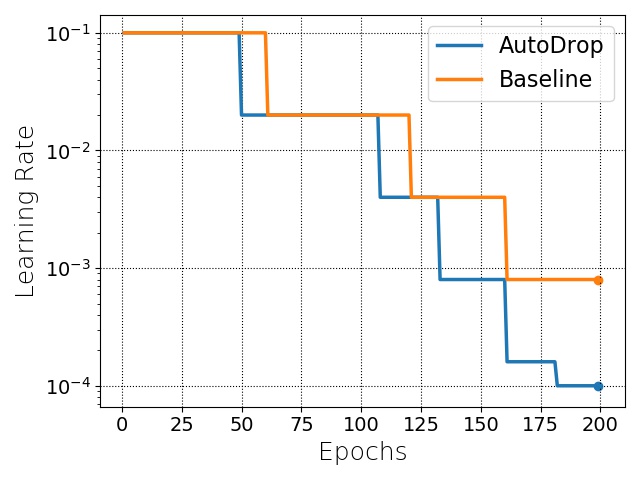}
\includegraphics[width=.4\textwidth]{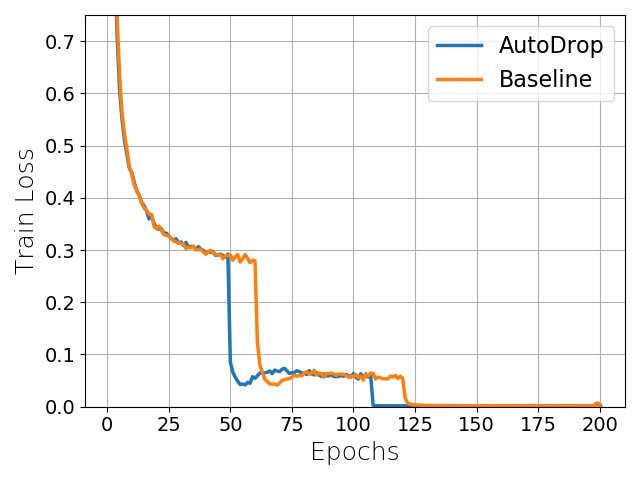}
\includegraphics[width=.4\textwidth]{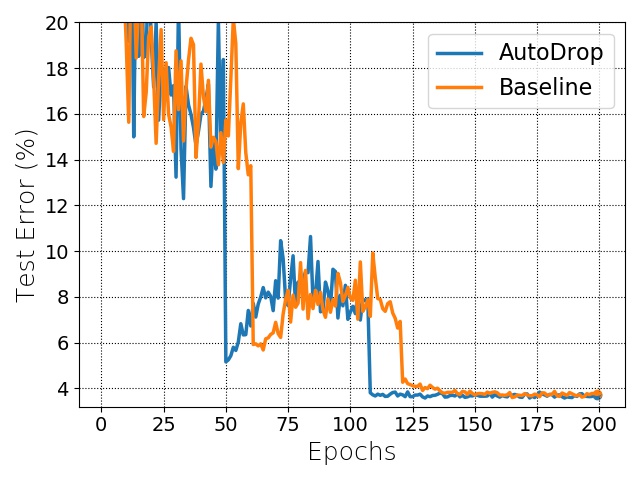}
\includegraphics[width=.4\textwidth]{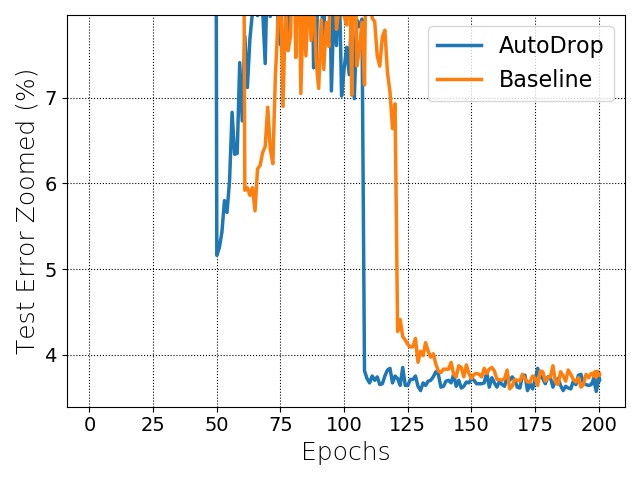}
\vspace{-0.15in}
\caption{Experimental curves for WRN-$28$x$10$ model and CIFAR-$10$ data set. Top (from left to right): learning rate and train loss. Bottom (from left to right): test error and zoomed test error.}
\end{figure}

\begin{figure}
\vspace{-0.1in}
\centering
\includegraphics[width=.4\textwidth]{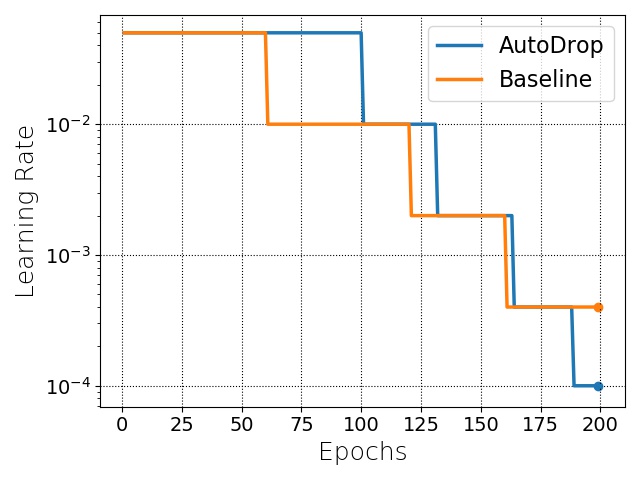}
\includegraphics[width=.4\textwidth]{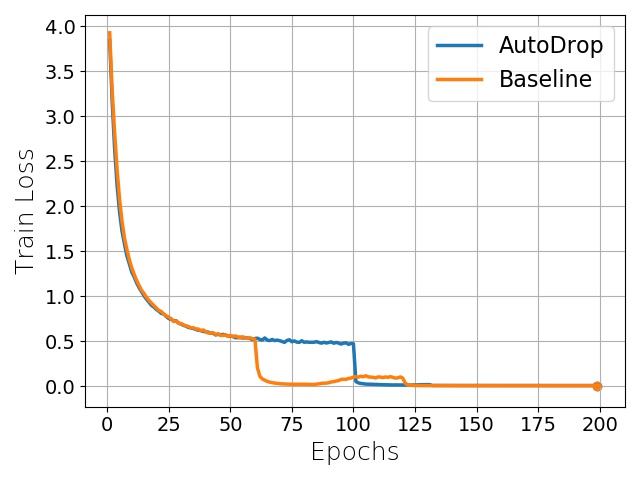}
\includegraphics[width=.4\textwidth]{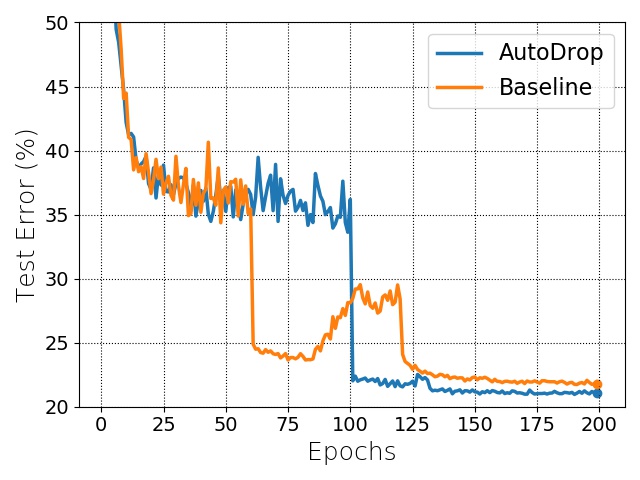}
\includegraphics[width=.4\textwidth]{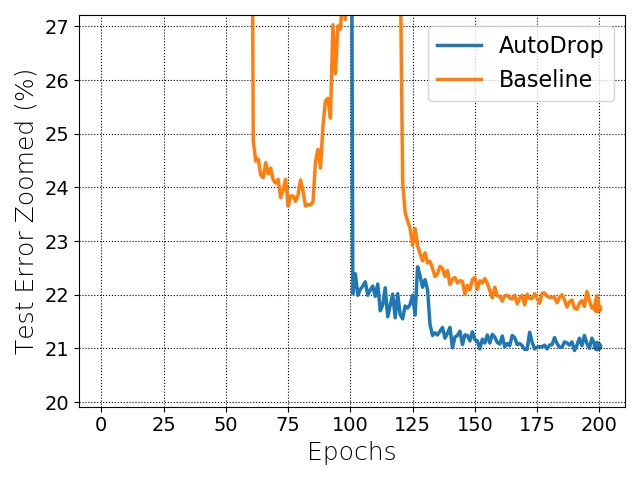}
\vspace{-0.15in}
\caption{Experimental curves for ResNet-$34$ model and CIFAR-$100$ data set. Top (from left to right): learning rate and train loss. Bottom (from left to right): test error and zoomed test error.}
\end{figure}

\begin{figure}
\vspace{-0.1in}
\centering
\includegraphics[width=.4\textwidth]{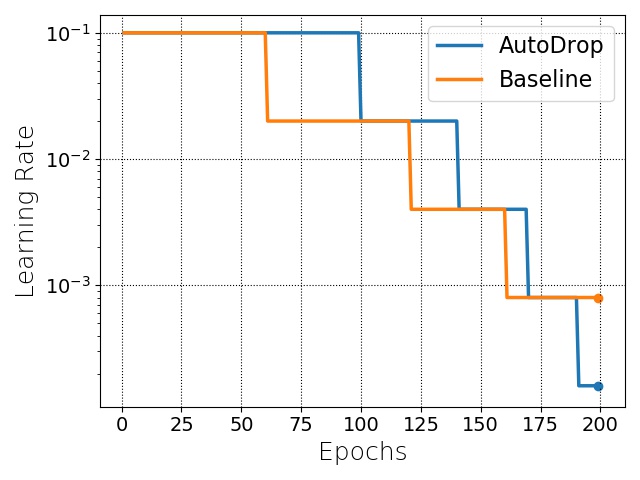}
\includegraphics[width=.4\textwidth]{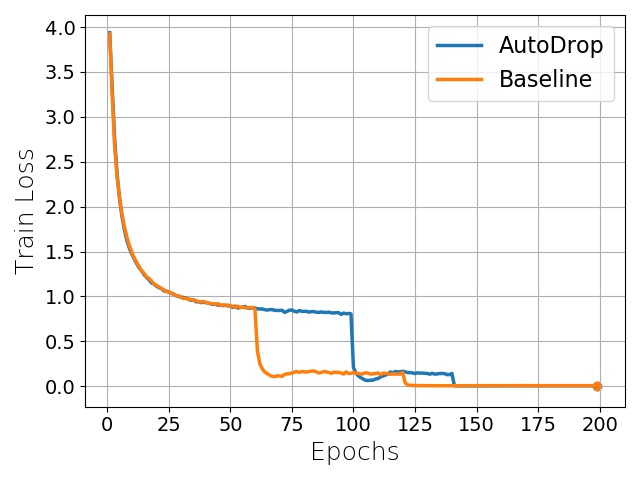}
\includegraphics[width=.4\textwidth]{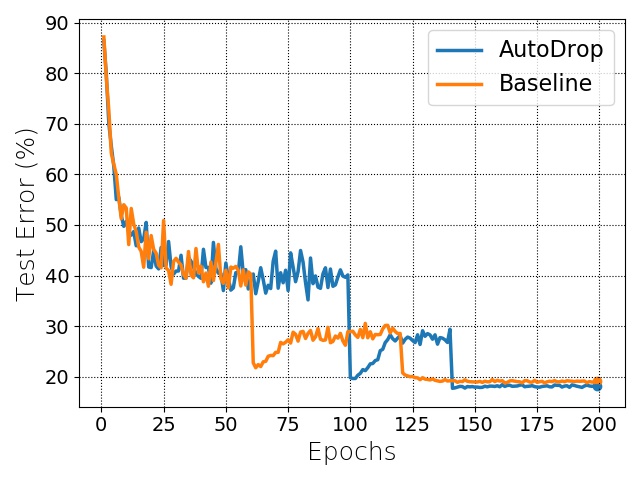}
\includegraphics[width=.4\textwidth]{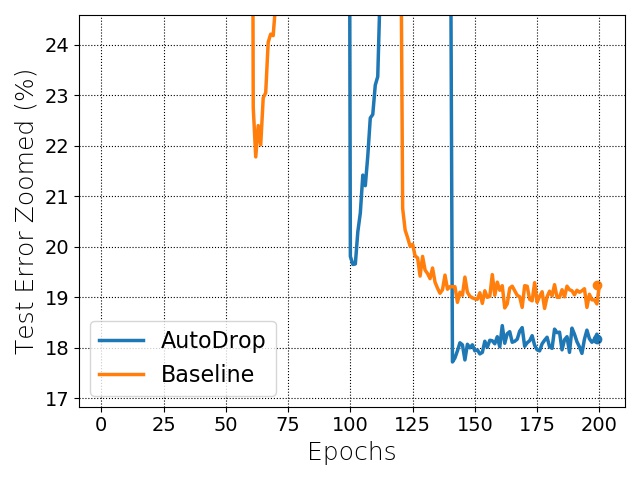}
\vspace{-0.15in}
\caption{Experimental curves for WRN-$40$x$10$ model and CIFAR-$100$ data set. Top (from left to right): learning rate and train loss. Bottom (from left to right): test error and zoomed test error.}
\end{figure}

\newpage
\clearpage

\begin{figure}[htp!]
\vspace{-0.1in}
\centering
\includegraphics[width=.4\textwidth]{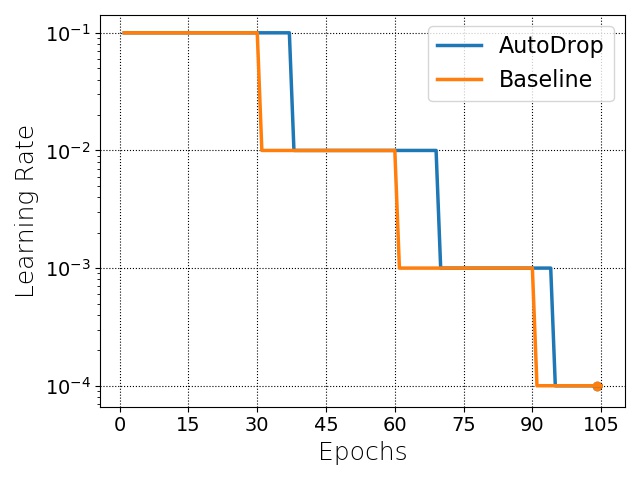}
\includegraphics[width=.4\textwidth]{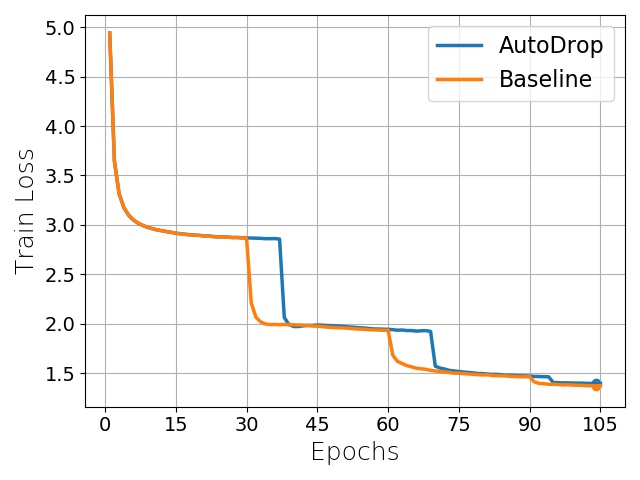}
\includegraphics[width=.4\textwidth]{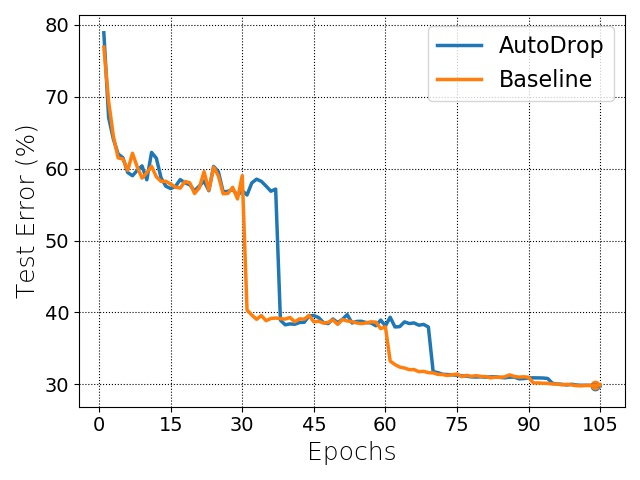}
\includegraphics[width=.4\textwidth]{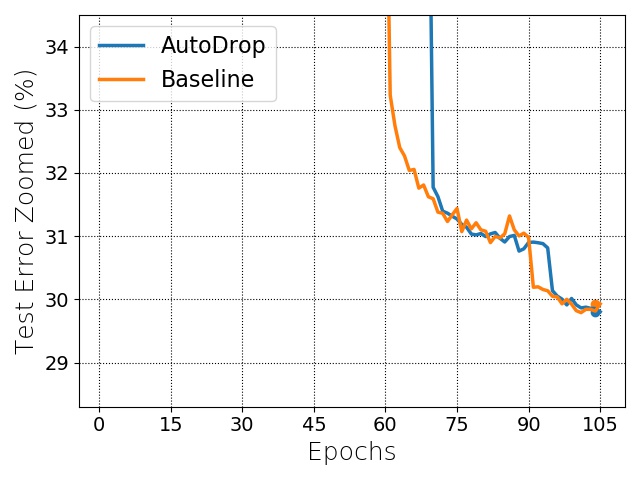}
\vspace{-0.15in}
\caption{Experimental curves for ResNet-$18$ model and ImageNet data set. Top (from left to right): learning rate and train loss. Bottom (from left to right): test error and zoomed test error.}
\end{figure}

\end{document}